\def\eqref#1{equation~(\ref{#1})}
\def\Algref#1{Algorithm~\ref{#1}}
\def\1{\bm{1}}
\DeclareMathAlphabet{\mathsfit}{\encodingdefault}{\sfdefault}{m}{sl}
\SetMathAlphabet{\mathsfit}{bold}{\encodingdefault}{\sfdefault}{bx}{n}
\DeclareMathOperator{\Tr}{Tr}
\newtheorem*{rep@theorem}{\rep@title}
\newcommand{\newreptheorem}[2]{%
\newenvironment{rep#1}[1]{%
 \def\rep@title{#2 \ref{##1}}%
 \begin{rep@theorem}}%
 {\end{rep@theorem}}}
\newtheorem{theorem}{Theorem}
\newtheorem{lemma}{Lemma}
\newtheorem{definition}{Definition}
\definecolor{green}{rgb}{0.0, 0.42, 0.24} 
\definecolor{orange}{rgb}{0.8, 0.33, 0.} 
\definecolor{blue}{rgb}{0.16, 0.32, 0.75} 
\definecolor{cobalt}{rgb}{0.0, 0.28, 0.67} 
\definecolor{egred}{rgb}{1.0, 0.25, 0.25}
\definecolor{codegreen}{rgb}{0,0.6,0}
\definecolor{codegray}{rgb}{0.5,0.5,0.5}
\definecolor{codepurple}{rgb}{0.58,0,0.82}
\definecolor{backcolour}{rgb}{0.95,0.95,0.92}
\lstdefinestyle{mystyle}{
    backgroundcolor=\color{backcolour},   
    commentstyle=\color{codegreen},
    keywordstyle=\color{magenta},
    numberstyle=\tiny\color{codegray},
    stringstyle=\color{codepurple},
    basicstyle=\ttfamily\footnotesize,
    breakatwhitespace=false,         
    breaklines=true,                 
    captionpos=b,
    escapeinside={\%*}{*)},
    keepspaces=true,                 
    numbers=left,                    
    numbersep=5pt,                  
    showspaces=false,                
    showstringspaces=false,
    showtabs=false,                  
    tabsize=2
}
\newcommand{\sg}{\mathord{\raisebox{0.0pt}{\tikz{\node[draw,scale=.65,regular polygon, regular polygon sides=8,fill=red](){};}}}}
\newcommand{\mnist}{\textsc{Mnist}}
\newcommand{\qr}{\texttt{QR}}
\newcommand{\seg}{$\mu$-EigenGame}
\newcommand{\segshort}{$\mu$-EG}
\newcommand{\oeg}{$\alpha$-EigenGame}
\newcommand{\oegshort}{$\alpha$-EG}
\newcommand{\cov}{C}
\definecolor{green}{rgb}{0.0, 0.42, 0.24} 
\definecolor{orange}{rgb}{0.8, 0.33, 0.} 
\definecolor{blue}{rgb}{0.16, 0.32, 0.75} 
\definecolor{cobalt}{rgb}{0.0, 0.28, 0.67} 
\title{$\epsdice{4}$ EigenGame Unloaded $\epsdice{2}$\\  When playing games is better than optimizing}
\author{Ian Gemp\thanks{denotes equal contribution.}\,\,\,, Brian McWilliams$^*$, Claire Vernade \& Thore Graepel \\
DeepMind, London UK\\
\texttt{\{imgemp,bmcw,vernade\}@deepmind.com, thoregraepel@gmail.com} \\
}
\begin{document}

\maketitle

\begin{abstract}
We build on the recently proposed EigenGame that views eigendecomposition as a competitive game. EigenGame's updates are biased if computed using minibatches of data, which hinders convergence and more sophisticated parallelism in the stochastic setting. In this work, we propose an unbiased stochastic update that is asymptotically equivalent to EigenGame, enjoys greater parallelism allowing computation on datasets of larger sample sizes, and outperforms EigenGame in experiments. We present applications to finding the principal components of massive datasets and performing spectral clustering of graphs. We analyze and discuss our proposed update in the context of EigenGame and the shift in perspective from optimization to games.
\end{abstract}

\section{Introduction}
\label{intro}

Large, high-dimensional datasets containing billions of samples are commonplace. Dimensionality reduction to extract the most informative features is an important step in the data processing pipeline which enables faster learning of classifiers and regressors~\citep{dhillon2013risk}, clustering~\citep{kannan2009spectral}, and interpretable visualizations. Many dimensionality reduction and clustering techniques rely on eigendecomposition at their core including principal component analysis~\citep{jolliffe2002principal}, locally linear embedding~\citep{roweis2000nonlinear}, multidimensional scaling~\citep{mead1992review}, Isomap~\citep{tenenbaum2000global}, and graph spectral clustering~\citep{von2007tutorial}.

Numerical solutions to the eigenvalue problem have been approached from a variety of angles for centuries: Jacobi's method, Rayleigh quotient, power (von Mises) iteration~\citep{golub2000eigenvalue}. For large datasets that do not fit in memory, approaches that access only subsets---or \emph{minibatches}---of the data at a time have been proposed.

Recently, EigenGame~\citep{gemp2021eigengame} was introduced with the novel perspective of viewing the set of eigenvectors as the Nash strategy of a suitably defined game. While this work demonstrated an algorithm that was empirically competitive given access to only subsets of the data, its performance degraded with smaller minibatch sizes, which are required to fit high dimensional data onto devices.

One path towards circumventing EigenGame's need for large minibatch sizes is parallelization. In a data parallel approach, updates are computed in parallel on partitions of the data and then combined such that the aggregate update is equivalent to a single large-batch update. The technical obstacle preventing such an approach for EigenGame lies in the bias of its updates, i.e., the divide-and-conquer EigenGame update is not equivalent to the large-batch update. Biased updates are not just a theoretical nuisance; they can slow and even prevent convergence to the solution (made obvious in Figure~\ref{fig:exp:scam}).

In this work we introduce a formulation of EigenGame which admits unbiased updates which we term \seg{}. We will refer to the original formulation of EigenGame as \oeg{}.\footnote[1]{$\mu$ signifies unbiased or \emph{un}loaded and $\alpha$ denotes original.}

\seg{} and \oeg{} are contrasted in Figure~\ref{fig:tease}. Unbiased updates allow us to increase the effective batch size using data parallelism. Lower variance updates mean that \seg{} should converge faster and to more accurate solutions than \oeg{} regardless of batch size.
In Figure~\ref{fig:tease:trajectory} (top), the density of the shaded region shows the distribution of steps taken by the stochastic variant of each algorithm after 100 burn-in steps. Although the expected path of \oegshort{} is slightly more direct, its stochastic variant has much larger variance. Figure~\ref{fig:tease:trajectory} (bottom) shows that with increasing iterations, the \segshort{} trajectory approaches its expected value whereas \oegshort{} exhibits larger bias. Figure~\ref{fig:ueg:util} further supports \seg{}'s reduced bias with details in Sections~\ref{ueg} and~\ref{theory}.

\textbf{Our contributions}: In the rest of the paper, we present our new formulation of EigenGame, analyze its bias and propose a \textbf{novel unbiased parallel variant, \seg{}} with \emph{stochastic} convergence guarantees. \seg{}'s utilities are distinct from~\oeg{} and offer an alternative perspective. We demonstrate its performance with extensive experiments including dimensionality reduction of massive data sets and clustering a large social network graph. We conclude with discussions of the algorithm's design and context within optimization, game theory, and neuroscience.

\begin{figure}
\begin{minipage}{\linewidth}
    \centering
    \begin{subfigure}[b]{.23\textwidth}
    \includegraphics[scale=0.4]{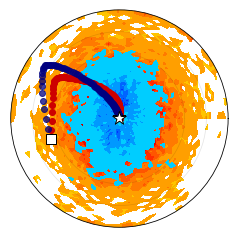}
\\
    \includegraphics[scale=0.3]{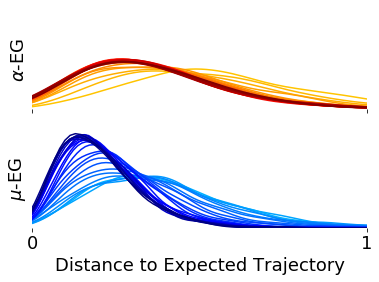}
    \caption{\label{fig:tease:trajectory}}
    \end{subfigure}
\qquad
\begin{subfigure}[b]{0.6\textwidth}
    \centering
    \includegraphics[scale=0.55]{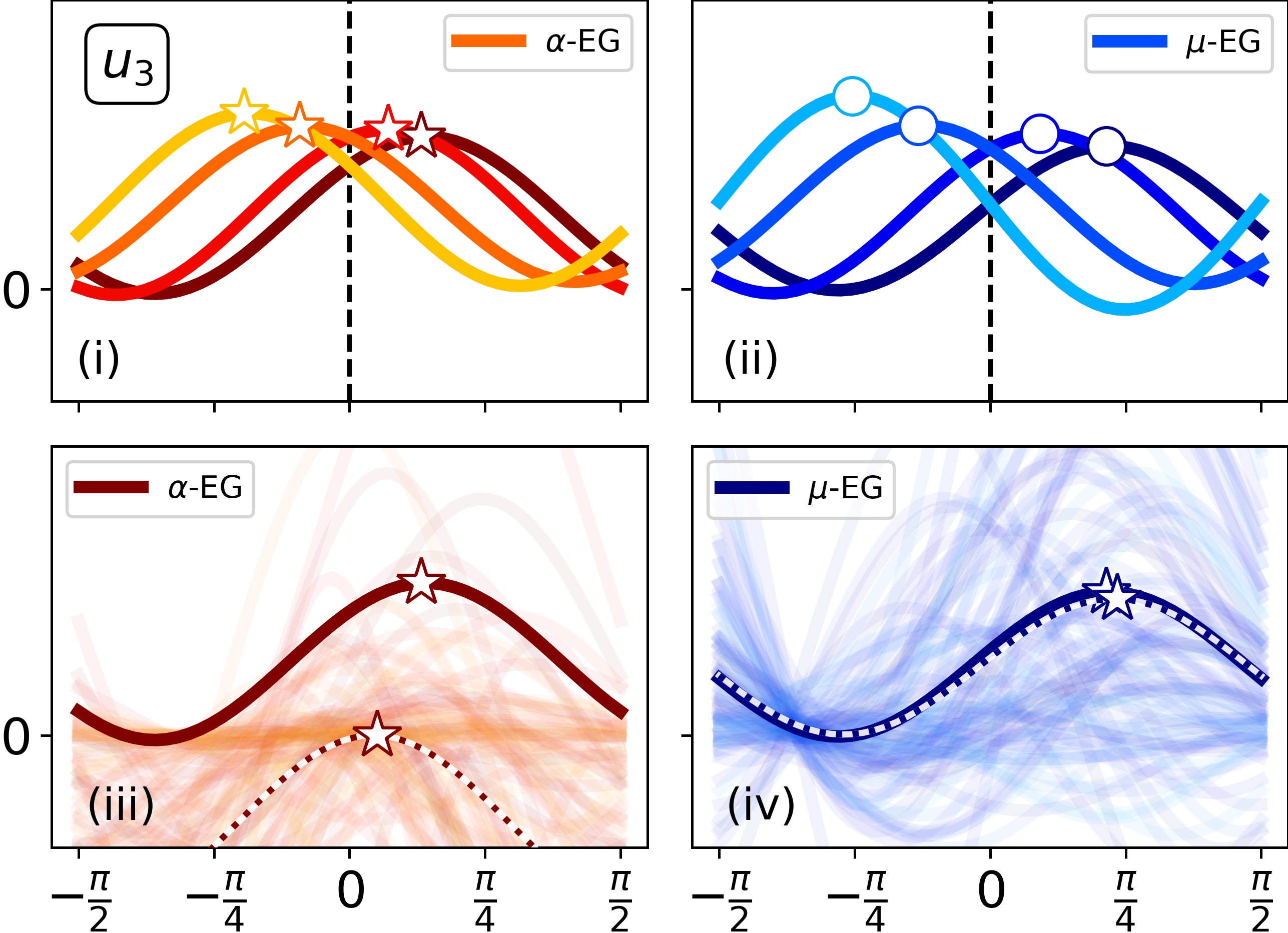}
    \caption{\label{fig:ueg:util}}
\end{subfigure}
    \caption[]{\textbf{(\subref{fig:tease:trajectory})} Comparing \textcolor{egred}{\oeg{}}~\citep{gemp2021eigengame} and \textcolor{blue}{\seg{}} (this work) over 1000 trials with a batch size of 1. \textbf{(top)} The expected trajectory\footnote[2]{The trajectory when updating with $\mathbb{E}[X_t^\top X_t]$.} of each algorithm from initialization ($\Box$) to the true value of the third eigenvector ($\star$). 
    \textbf{(bottom)} The distribution of distances between stochastic update trajectories and the expected trajectory of each algorithm as a function of iteration count (\textbf{bolder} lines are later iterations and modes further left are more desirable). 
    \vspace{-0.3cm}
\\
\\
\textbf{(\subref{fig:ueg:util})} Empirical support for Lemma~\ref{no_bias}. In the top row, player 3's utility is given for parents mis-specified by an angular distance along the sphere of $\angle(\hat{v}_{j < i}, v_{j < i}) \in$ [$-20^{\circ}, -10^{\circ}, 10^{\circ}, 20^{\circ}$] moving from light to dark. Player 3's mis-specification, $\angle(\hat{v}_i, v_i) $, is given by the x-axis (optimum is at $0$ radians). \oeg{} (i) exhibits slightly lower sensitivity than \seg{} (ii) to mis-specified parents (see \eqref{pseudo_obj}). However, when the utilities are estimated using samples $X_t \sim p(X)$ (faint lines), \seg{} remains accurate (iv), while \oeg{} (iii) returns a utility (dotted line) with an optimum that is shifted to the left and down. The downward shift occurs because of the random variable in the denominator of the penalty terms (see~\eqref{eq:pen_eg_mb}).\footnote[3]{Overestimation is expected by Jensen's: $\mathbb{E}[\frac{1}{X}] \ge \frac{1}{\mathbb{E}[X]}$.}
    } 
    \label{fig:tease}
    \vspace{-5pt}
\end{minipage}
\vspace{-0.5cm}
\end{figure}

\section{Preliminaries and related work}
\label{prelims}

In this work, we aim to compute the top-$k$ right singular vectors of data $X$, which is either represented as a matrix, $X \in \mathbb{R}^{n \times d}$, of $n$ $d$-dimensional samples, or as a $d$-dimensional random variable. In either case, we assume we can repeatedly sample a minibatch $X_t$ from the data of size $n' < n$, $X_t \in \mathbb{R}^{n' \times d}$. The top-$k$ right singular vectors of the dataset are then given by the top-$k$ eigenvectors of the (sample) covariance matrix, $\cov{} = \mathbb{E}[\frac{1}{n'} X_t^\top X_t] = \mathbb{E}[C_t]$.

  \begin{wrapfigure}{R}{0.5\textwidth}
    \begin{minipage}{0.5\textwidth}
\vspace{-0.5cm}    
\begin{algorithm}[H]
\begin{algorithmic}[1]  
    \STATE Given: data stream $X_t \in \mathbb{R}^{n' \times d}$, vectors $\hat{v}_i^0 \in \mathcal{S}^{d-1}$, step sequence $\eta_t$, and iterations $T$.
    \STATE $\hat{v}_i \leftarrow \hat{v}_i^0$ for all $i$
    \FOR{$t = 1: T$}
        \PARFOR{$i = 1: k$}
            \STATE $\texttt{rewards} \leftarrow \frac{1}{n'} X_{t}^\top X_{t} \hat{v}_i$
            \STATE $\texttt{penalties} \leftarrow \frac{1}{n'} \sum_{j < i} \langle X_{t} \hat{v}_i, X_{t} \hat{v}_j \rangle \hat{v}_j$
            \STATE $\tilde{\nabla}^{\mu}_{i} \leftarrow \texttt{rewards} - \texttt{penalties}$
            \STATE $\tilde{\nabla}^{\mu,R}_{i} \leftarrow \tilde{\nabla}^{\mu}_{i} - \langle \tilde{\nabla}^{\mu}_{i}, \hat{v}_i \rangle \hat{v}_i$ \label{line:proj_grad}
        \STATE $\hat{v}_i' \leftarrow \hat{v}_i + \eta_t \tilde{\nabla}^{\mu,R}_{i}$
        \STATE $\hat{v}_i \leftarrow \frac{\hat{v}_i'}{|| \hat{v}_i' ||}$
        \ENDPARFOR
    \ENDFOR
    \STATE return all $\hat{v}_i$
\end{algorithmic}
\caption{\seg{}$^R$ \label{alg:ueg}}
\label{unbiased_eg}
\end{algorithm}
\end{minipage}
\end{wrapfigure}

For small datasets, SVD is appropriate. However, the time, $\mathcal{O}(\min \{nd^2, n^2d \})$, and space, $\mathcal{O}(nd)$, complexity of SVD prohibit its use for larger datasets~\citep{shamir2015stochastic} including when $X$ is a random variable. For larger datasets, stochastic, randomized, or sketching algorithms are better suited. Stochastic algorithms such as Oja's algorithm~\citep{oja1982simplified,allen2017first} perform power iteration~\citep{rutishauser1971simultaneous} to iteratively improve an approximation, maintaining orthogonality of the eigenvectors typically through repeated \qr{} decompositions. Alternatively, randomized algorithms~\citep{halko2011finding,sarlos2006improved,cohen2017input} first compute a random projection of the data onto a $(k + p)$-subspace approximately containing the top-$k$ subspace. This is done using techniques similar to Krylov subspace iteration methods~\citep{musco2015randomized}. After projecting, a call to SVD is then made on this reduced-dimensionality data matrix. Sketching algorithms~\citep{feldman2020turning} such as Frequent Directions~\citep{ghashami2016frequent} also target learning the top-$k$ subspace by maintaining an overcomplete sketch matrix of size $(k + p) \times d$ and maintaining a span of the top subspace with repeated calls to SVD. In both the randomized and sketching approaches, a final SVD of the $n \times (k + p)$ dataset is required to recover the desired singular vectors. Although the SVD scales linearly in $n$, some datasets are too large to fit in memory; in this case, an out-of-memory SVD may suffice~\citep{haidar2017out}. For this reason, the direct approach of stochastic algorithms, which avoid an SVD call altogether, is appealing when processing very large datasets.

A large literature on \emph{distributed} approaches to PCA exists \citep{liang2014improved, garber2017communication, fan2019distributed}. These typically follow the pattern of computing solutions locally and then aggregating them in a single round (or minimal rounds) of communication. The modern distributed machine learning setting which has evolved to meet the needs of deep learning is fundamentally different. Many accelerators joined with fast interconnects means the cost of communication is low compared to the cost of a single update step, however existing approaches to distributed PCA cannot take full advantage of this.

\textbf{Notation}: We follow the same notation as~\citet{gemp2021eigengame}. Variables returned by an approximation algorithm are distinguished from the true solutions with hats, e.g., the column-wise matrix of eigenvectors $\hat{V}$ approximates $V$. We order the columns of $V$ such that the $i$th column, $v_i$, is the eigenvector with the $i$th largest eigenvalue $\lambda_i$. The set of all eigenvectors $\{v_j\}$ with $\lambda_j$ larger than $\lambda_i$, namely $v_i$'s \emph{parents}, will be denoted by $v_{j<i}$. Similarly, sums over subsets of indices may be abbreviated as $\sum_{j < i} = \sum_{j=1}^{i-1}$. The set of all parents \emph{and} children of $v_i$ are denoted by $v_{-i}$. Let the $i$th eigengap $g_i = \lambda_i - \lambda_{i+1}$. We assume the standard Euclidean inner product $\langle u, v \rangle = u^\top v$ and denote the unit-sphere and simplex in ambient space $\mathbb{R}^{d}$ with $\mathcal{S}^{d-1}$ and $\Delta^{d-1}$ respectively.

\paragraph{\oeg{}.}
We build on the algorithm introduced by~\citet{gemp2021eigengame}, which we refer to here as \oeg{}. This algorithm is derived by formulating the eigendecomposition of a symmetric positive definite matrix as the Nash equilibrium of a game among $k$ players, each player $i$ owning the approximate eigenvector $\hat{v}_i \in \mathcal{S}^{d-1}$. Each player is also assigned a utility function, $u^{\alpha}_i(\hat{v}_i \vert \hat{v}_{j < i})$, that they must maximize:
\begin{align}
    u^{\alpha}_i(\hat{v}_i \vert \hat{v}_{j < i}) &= \overbrace{\hat{v}_i^\top \cov{} \hat{v}_i}^{\texttt{Var}} - \sum_{j < i} \overbrace{\frac{\langle \hat{v}_i, \cov{} \hat{v}_j \rangle^2}{\langle \hat{v}_j, \cov{} \hat{v}_j \rangle}}^{\texttt{Align}\text{-penalty}}. \label{eq:util_eg}
\end{align}
These utilities balance two terms, one that rewards a $\hat{v}_i$ that captures more variance in the data and a second term that penalizes $\hat{v}_i$ for failing to be orthogonal to each of its parents $\hat{v}_{j < i}$ (these terms are indicated with \texttt{Var} and \texttt{Align}-penalty in \eqref{eq:util_eg}). In \oeg{}, each player simultaneously updates $\hat{v}_i$ with gradient ascent, and it is shown that this process converges to the Nash equilibrium. We are interested in extending this approach to the data parallel setting where each player $i$ may distribute its update computation over multiple devices. 
\section{A scalable unbiased algorithm}
\label{ueg}
We present our novel modification to \oeg{} called \seg{} along with intuition, theory, and empirical support for critical lemmas. We begin with identifying and systematically removing the bias that exists in the \oeg{} updates. We then explain how removing bias allows us to exploit modern compute architectures culminating in the development of a highly parallelizable algorithm.


\subsection{\oeg{}'s biased updates}
\label{eg:bias}

Consider partitioning the sample covariance matrix $\cov{}_t$ into a sum of $m$ matrices as $\cov{}_t = \frac{1}{n'} X_t^\top X_t = \frac{1}{m} \sum_m \frac{m}{n'} X_{tm}^\top X_{tm} = \frac{1}{m} \sum_m \cov{}_{tm}$. For sake of exposition, we drop the additional subscript $t$ on $C$ in what follows. We would like \oeg{} to parallelize over these partitions. However, the gradient of $u_i^{\alpha}$ with respect to $\hat{v}_i$ does not decompose cleanly over the data partitions:
\begin{align}
    \nabla^{\alpha}_i &\propto \overbrace{\cov{} \hat{v}_i}^{\texttt{Var}} - \sum_{j < i} \overbrace{\frac{\hat{v}_i^\top \cov{} \hat{v}_j}{v_j^\top \cov{} \hat{v}_j} \cov{} \hat{v}_j}^{\texttt{Align}\text{-penalty}} = \frac{1}{m} \sum_m \Big[ \cov{}_m \hat{v}_i - \sum_{j < i} \boxed{\frac{\hat{v}_i^\top \textcolor{red}{\cov{}} \hat{v}_j}{\hat{v}_j^\top \textcolor{red}{\cov{}} \hat{v}_j}} \cov{}_m \hat{v}_j \Big]. \label{eq:grad_eg_mb}
\end{align}

We include the superscript $\alpha$ on the EigenGame gradient to differentiate it from the \seg{} direction later. The nonlinear appearance of $\cov{}$ in the penalty terms makes obtaining an unbiased gradient difficult. The quadratic term in the numerator of~\eqref{eq:grad_eg_mb} could be made unbiased by using two sample estimates of $\cov{}$, one for each term. But the appearance of the term in the denominator does not have an easy solution. $\cov{}_m$ is likely singular for small $n'$ ($n' < d$) which increases the likelihood of a small denominator, i.e., a large penalty coefficient (boxed), if we were to estimate the denominator with samples. The result is an update that emphasizes penalizing orthogonality over capturing data variance. Techniques exist to reduce the bias of samples of ratios of random variables, but to our knowledge, techniques to obtain unbiased estimates are not available. This was conjectured by \citet{gemp2021eigengame} as the reason for why \oeg{} performed worse with small minibatches.

\subsection{Removing \oeg{}'s bias}

It is helpful to rearrange~\eqref{eq:grad_eg_mb} to shift perspective from estimating a penalty coefficient (in red) to estimating a penalty direction (in blue):
\begin{align}
    \nabla^{\alpha}_i &\propto \frac{1}{m} \sum_m \Big[ \cov{}_m \hat{v}_i - \sum_{j < i} \hat{v}_i^\top \cov{}_m \hat{v}_j \textcolor{blue}{\frac{\cov{} \hat{v}_j}{\hat{v}_j^\top \cov{} \hat{v}_j}} \Big]. \label{eq:pen_eg_mb}
\end{align}

The penalty direction in~\eqref{eq:pen_eg_mb} is still difficult to estimate. However, consider the case where $\hat{v}_j$ is any eigenvector of $\cov{}$ with associated (unknown) eigenvalue $\lambda'$. In this case, $\cov{} \hat{v}_j = \lambda' \hat{v}_j$ and the penalty direction (in blue) simplifies to $\hat{v}_j$ because $||\hat{v}_j||=1$. While this assumption is certainly not met at initialization, \oeg{} leads each $\hat{v}_j$ towards $v_j$, so we can expect this assumption to be met asymptotically.

This intuition motivates the following \seg{} update direction for $\hat{v}_i$ with inexact parents $\hat{v}_j$ (compare orange in~\eqref{eq:delta_eg} to blue in~\eqref{eq:pen_eg_mb}):
\begin{align}
    \Delta^{\mu}_i &= \cov{} \hat{v}_i - \sum_{j < i} (\hat{v}_i^\top \cov{} \hat{v}_j) \textcolor{orange}{\hat{v}_j} = \frac{1}{m} \sum_m \Big[ \cov{}_m \hat{v}_i - \sum_{j < i} (\hat{v}_i^\top \cov{}_m \hat{v}_j) \textcolor{orange}{\hat{v}_j} \Big]. \label{eq:delta_eg}
\end{align}
We use $\Delta$ instead of $\nabla$ because the direction is not a gradient (discussed later). Notice how the strictly linear appearance of $\cov{}$ in \seg{} allows the update to easily decompose over the data partitions in~\eqref{eq:delta_eg}. The \seg{} update satisfies two important properties.


\begin{lemma}[Asymptotic equivalence]
\label{asymp_equiv}
The \seg{} direction, $\Delta^{\mu}_i$, with exact parents ($\hat{v}_j = v_j \,\, \forall \,\, j < i$) is equivalent to \oeg{}.
\end{lemma}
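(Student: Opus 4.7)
The plan is to substitute the exactness assumption $\hat{v}_j = v_j$ into both update expressions and simplify each using the spectral identity $\Sigma v_j = \lambda_j v_j$ together with the unit-norm convention $\lVert v_j \rVert = 1$ implicit in the paper's setup. Since both $\Delta^{\mu}_i$ and $\nabla^{\alpha}_i$ share the same Var term $\Sigma \hat{v}_i$, the work reduces entirely to showing the $\perp$-penalty sums agree term by term.

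First, I would rewrite the $\alpha$-EigenGame penalty direction. Fixing $j < i$ and substituting $\hat{v}_j = v_j$ into the $j$-th summand of $\nabla^{\alpha}_i$ from \eqref{eq:grad_eg} gives
\begin{align*}
    \frac{\hat{v}_i^\top \Sigma v_j}{v_j^\top \Sigma v_j}\,\Sigma v_j
    = \frac{\lambda_j\, \hat{v}_i^\top v_j}{\lambda_j\, v_j^\top v_j}\,\lambda_j v_j
    = \lambda_j (\hat{v}_i^\top v_j)\, v_j,
\end{align*}
where I used $\Sigma v_j = \lambda_j v_j$ in both the numerator and the rightmost factor, and $\lVert v_j \rVert^2 = 1$ to simplify the denominator. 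Summing over $j < i$ and combining with the shared Var term yields
\begin{align*}
    \nabla^{\alpha}_i \,\Big|_{\hat{v}_j = v_j}
    \;\propto\; \Sigma \hat{v}_i - \sum_{j<i} \lambda_j (\hat{v}_i^\top v_j)\, v_j.
\end{align*}

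Next, I would perform the analogous substitution in the $\mu$-EigenGame direction from \eqref{eq:delta_eg}. The inner product in the coefficient becomes $\hat{v}_i^\top \Sigma v_j = \lambda_j (\hat{v}_i^\top v_j)$, so
\begin{align*}
    \Delta^{\mu}_i \,\Big|_{\hat{v}_j = v_j}
    = \Sigma \hat{v}_i - \sum_{j<i} \lambda_j (\hat{v}_i^\top v_j)\, v_j,
\end{align*}
which matches the expression above, establishing equivalence up to the same positive proportionality factor already absorbed into the $\propto$ symbol of \eqref{eq:grad_eg}.

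The main (mild) obstacle is simply being explicit about the unit-norm convention and the proportionality constant: $\nabla^{\alpha}_i$ is defined only up to a positive scalar in \eqref{eq:grad_eg}, so I would clarify in the statement (or a short remark) that equivalence here means equality of directions, not of magnitudes. No further machinery is needed; the proof is a direct two-line calculation once the spectral identity is applied.
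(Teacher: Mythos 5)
Your proof is correct and follows essentially the same route as the paper's: substitute $\hat{v}_j = v_j$, invoke $\Sigma v_j = \lambda_j v_j$ and $\|v_j\|=1$, and cancel the $\lambda_j$ factors in the penalty term. The only cosmetic difference is that you additionally expand $\hat{v}_i^\top \Sigma v_j$ to $\lambda_j(\hat{v}_i^\top v_j)$ on both sides, while the paper stops at $(\hat{v}_i^\top \Sigma v_j)\,v_j$ to make the final expression literally match $\Delta^{\mu}_i$; your clarifying remark about the $\propto$ convention is appropriate and consistent with the paper's usage.
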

\begin{proof}
We start with \oeg{} and add a superscript $e$ to its gradient to emphasize this is the gradient computed with exact parents ($\hat{v}_j = v_j$). Then simplifying, we find
\begin{align}
    \nabla^{\alpha,e}_i &\propto \cov{} \hat{v}_i - \sum_{j < i} \frac{\hat{v}_i^\top \cov{} v_j}{v_j^\top \cov{} v_j} \cov{} v_j
    = \cov{} \hat{v}_i - \sum_{j < i} \frac{\hat{v}_i^\top \cov{} v_j}{v_j^\top \cancel{\lambda_j} v_j} \cancel{\lambda_j} v_j
    = \cov{} \hat{v}_i - \sum_{j < i} (\hat{v}_i^\top \cov{} v_j) v_j = \Delta^{\mu}_i.
\end{align}
Therefore, once the first $(i-1)$ eigenvectors are learned, learning the $i$th eigenvector with \seg{} is equivalent to learning with \oeg{}.
\end{proof}


\begin{lemma}[Zero bias]
\label{no_bias}
Unbiased estimates of $\Delta^{\mu}_i$ can be obtained with samples from $p(X)$.
\end{lemma}
\begin{proof}
Let $X \sim p(X)$ where $X \in \mathbb{R}^d$ and $p(X)$ is the uniform distribution over the dataset. Then
\begin{align}
    \mathbb{E}[\Delta^{\mu}_i]
    = \mathbb{E}[X X^\top] \hat{v}_i - \sum_{j < i} (\hat{v}_i^\top \mathbb{E}[X X^\top] \hat{v}_j) \hat{v}_j
    = \cov{} \hat{v}_i - \sum_{j < i} (\hat{v}_i^\top \cov{} \hat{v}_j) \hat{v}_j.
\end{align}
where all expectations are with respect to $p(X)$.
\end{proof}

These two lemmas provide the foundation for a performant algorithm. The first enables convergence to the desired solution, while the second facilitates scaling to larger datasets.
Algorithm~\ref{alg:ueg} presents pseudocode for \seg{} where computation is parallelized over the $k$ players.

\subsection{Model and data parallelism}
\label{ueg:parallelization}


\begin{wrapfigure}{R}{0.5\textwidth}
\centering
\begin{subfigure}[t]{.23\textwidth}
\centering
\includegraphics[align=t, scale=0.15]{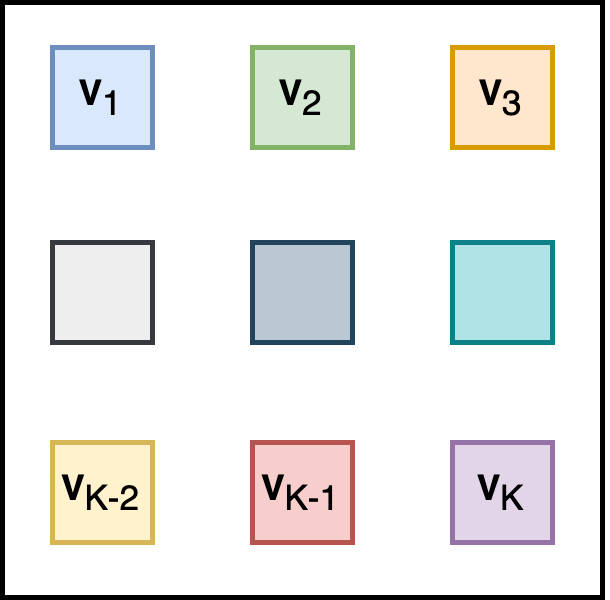}
\caption{\label{fig:ueg:par:1}}
\end{subfigure}
\begin{subfigure}[t]{.23\textwidth}
\centering
\includegraphics[align=t, scale=0.15]{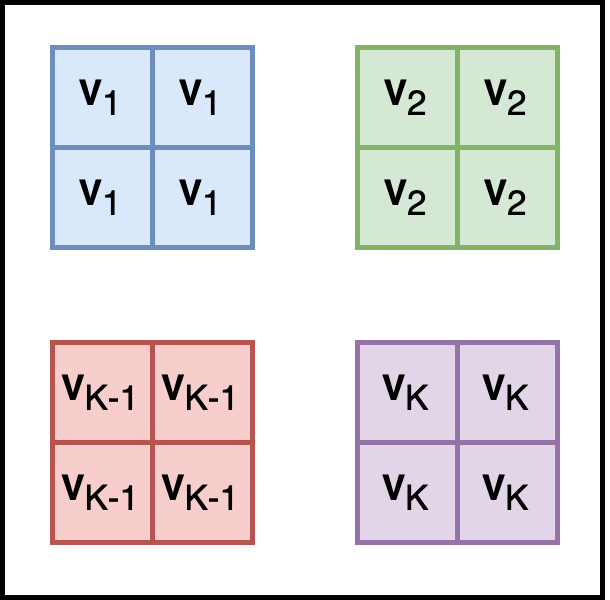}
\caption{\label{fig:ueg:par:2}}
\end{subfigure}

\caption{\label{fig:ueg:par} (\subref{fig:ueg:par:1}) Extreme model parallelism as proposed in \oeg{}. (\subref{fig:ueg:par:2}) Model and data parallelism enabled by \seg{}. Squares are separate devices (here, $M=4$). Copies of estimates are color-coded. Updates are averaged across copies for a larger effective batch size. }
\end{wrapfigure}

In our setting we have a number of connected devices. Specifically we consider the parallel framework specified by TPUv3 available in Google Cloud, however our setup is applicable to any multi-host, multi-device system.
The  \oeg{} formulation~\citep{gemp2021eigengame} considers an extreme form of model parallelism (Figure~\ref{fig:ueg:par:1}) where each device has its own unique set of eigenvectors.

In this work we further consider a different form of model and data parallelism which is directly enabled by having unbiased updates (Figure~\ref{fig:ueg:par:2}). This enables \seg{} to deal with both high-dimensional problems as well as massive sample sizes. Here each set of eigenvectors is copied on $M$ devices. Update directions are computed on each device individually using a different data stream and then combined by summing or averaging. Updates are applied to a single copy and this is duplicated across the $M-1$ remaining devices. In this way, updates are computed using an $M\times$ larger effective batch size while still allowing device-wise model parallelism. This setting is particularly useful when the number of samples is very large. This form of parallelism is not possible using the original EigenGame formulation since it relies on combining \emph{unbiased} updates. In this sense, the parallelism discussed in this work generalizes that introduced by \citet{gemp2021eigengame}.

Note that we also allow for within-device parallelism. That is, each $v_i$ in Figure~\ref{fig:ueg:par} is a contiguous collection of eigenvectors which are updated independently, in parallel, on a given device (for example using \texttt{vmap} in Jax). We provide pseudocode in~\Algref{unbiased_eg_par} in the appendix which simply augments~\Algref{unbiased_eg} with an additional parallelized for-loop and aggregation step over available devices. We also provide detailed Jax pseudo-code for parallel \seg{} in Appendix~\ref{app:code}. We compare the empirical scaling performance of \seg{} against \oeg{} on a 14 billion sample dataset in section~\ref{exp}.

\section{SVD as the solution to a new EigenGame}
\label{theory}
We theoretically examine the \seg{} algorithm and 1) prove that, using only minibatches of data, \seg{} converges globally to the true eigenvectors, which 2) comprise the Nash equilibrium of a novel game formulation we recover through deriving \emph{pseudo}-utility functions from update rules. Beyond proving specific theoretical properties of \seg{}, we believe these proof techniques may be of wider interest to the community.

\subsection{Convergence to SVD}
\label{conv}

The asymptotic equivalence of \seg{} to \oeg{} ensures \seg{} is globally, asymptotically convergent and its unbiased updates ensure it is scalable. Proof in appendix~\ref{app:global_conv}.

\begin{theorem}[Global convergence]
\label{thm:global_conv}
Given a positive definite covariance matrix $\cov{}$ with the top-$k$ eigengaps positive and a square-summable, not summable step size sequence $\eta_t$ (e.g., $1/t$), \Algref{unbiased_eg} converges to the top-$k$ eigenvectors asymptotically ($\lim_{T \rightarrow \infty}$) with probability $1$.
\end{theorem}

This stochastic asymptotic convergence result is complimentary to the deterministic (full-batch) finite-sample result in~\citet{gemp2021eigengame} where each $\hat{v}_i$ is learned in sequence. In contrast, the proof above applies when learning all $\hat{v}_i$ in parallel. We leave finite-sample convergence to future work~\citep{durmus2020convergence}.


\subsection{SVD is Nash of \seg{}}
\label{game}

We arrived at \seg{} by analyzing and improving properties of the \oeg{} update. However, the \seg{} update direction is linear in each $\hat{v}_i$. This suggests we may be able to design a \emph{pseudo}-utility function for it. Rearranging the update direction from~\eqref{eq:delta_eg} as
\begin{align}
    \Delta^{\mu}_i  &= \cov{} \hat{v}_i - \sum_{j < i} \hat{v}_j (\hat{v}_j^\top \cov{} \hat{v}_i)
    = \Big[ I - \sum_{j < i} \hat{v}_j \hat{v}_j^\top \Big] \cov{} \hat{v}_i = \tilde{\nabla}^{\mu}_i \label{eq:mueg_update}
\end{align}
reveals that we can reverse-engineer the following utility function
\begin{align}
    u^{\mu}_i &= \hat{v}_i^\top \Big[ \overbrace{I - \sum_{j < i} \hat{v}_j \hat{v}_j^\top}^{\text{deflation}} \Big] \cov{} \,\sg[\hat{v}_i] \label{pseudo_obj}
\end{align}
where $\sg$ is the stop gradient operator commonly used in deep learning packages. As the name implies, $\sg$ stops gradients from flowing through its argument so that~\eqref{pseudo_obj} appears linear in $\hat{v}_i$ instead of quadratic when differentiating the expression. In light of this, we have renamed $\Delta^{\mu}_i$ to $\tilde{\nabla}^{\mu}_i$ to emphasize that it is a \emph{pseudo}-gradient of $u^{\mu}_i$. Note that without the stop gradient, the true gradient of $u^{\mu}_i$ would be $[A + A^\top] \hat{v}_i$ rather than $A\hat{v}_i$ where $A = [I - \sum_{j < i} \hat{v}_j^\top \hat{v}_j] \cov{}$. We analyze this alternative in Appendix~\ref{eggha_grad} and find it, interestingly, to perform worse than \seg{} empirically.

The utility function $u^{\mu}_i$ has an intuitive meaning. It is the Rayleigh quotient for the matrix $\cov{}_i = [I - \sum_{j < i} \hat{v}^\top \hat{v}_j] \cov{}$, which gives the covariance after the subspace spanned by $\hat{v}_{j < i}$ has been removed. In other words, player $i$ is directed to find the largest eigenvalue in the orthogonal complement of the approximate top-$(i-1)$ subspace. This approach is known as ``deflating" the matrix $\cov{}$. Figure~\ref{fig:ueg:util} illustrates \seg{}'s reduced bias when estimating the new utility function (and resulting optimum) from an average over minibatches.

\begin{definition}[$\mu$-Eigen\textbf{Game}]
\label{def:mueg}
Let \seg{} be the game with players $i \in \{1, \ldots, k\}$, their respective strategy spaces $\hat{v}_i \in \mathcal{S}^{d-1}$, and their corresponding utilities $u^{\mu}_i$ as defined in~\eqref{pseudo_obj}.
\end{definition}

\begin{theorem}
\label{nash}
Top-$k$ SVD is the unique Nash of \seg{} given symmetric $\cov{}$ with the top-$k$ eigengaps positive.
\end{theorem}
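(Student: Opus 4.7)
The plan is to exploit the strictly hierarchical structure of $\mu$-EigenGame — player $i$'s pseudo-utility $u^{\mu}_i$ depends on $\hat{v}_{j<i}$ but never on its peers or children — so that a joint Nash profile can be pinned down one coordinate at a time. I read "Nash" in the standard best-response sense of the underlying quadratic form $\hat{v}^\top [I - \sum_{j<i}\hat{v}_j\hat{v}_j^\top]\Sigma\,\hat{v}$ on the unit sphere (i.e., the sg-free version of $u^{\mu}_i$). On this interpretation the argument is an induction on the player index $i$.

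First I would verify that PCA is a Nash. With $\hat{v}_j=v_j$ for all $j<i$, the projector $P=\sum_{j<i}v_jv_j^\top$ commutes with $\Sigma$ because the $v_j$ are eigenvectors, so $A=[I-P]\Sigma$ is symmetric and equals the deflated covariance $\sum_{l\ge i}\lambda_l v_lv_l^\top$. Maximizing the Rayleigh quotient of $A$ on $\mathcal{S}^{d-1}$ picks its top eigenvector; the strict gap $\lambda_i>\lambda_{i+1}$ makes this $\pm v_i$ uniquely. So PCA is a mutual best response.

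For uniqueness I would induct on $i$. Base: player $1$ has no parents, so $u^{\mu}_1=\hat{v}_1^\top\Sigma\hat{v}_1$; the condition $\lambda_1>\lambda_2$ forces $\hat{v}_1=\pm v_1$. Inductive step: if at Nash $\hat{v}_j=\pm v_j$ for all $j<i$, the sign ambiguity is harmless since $(\pm v_j)(\pm v_j)^\top=v_jv_j^\top$, so $P=\sum_{j<i}v_jv_j^\top$ still commutes with $\Sigma$ and $A=[I-P]\Sigma$ is still symmetric with spectrum $\{\lambda_l:l\ge i\}\cup\{0\}$. The best response of player $i$ is therefore the unique top eigenvector $\pm v_i$. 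Combining yields $\hat{v}_i=\pm v_i$ for every $i$, which is PCA (the Nash is unique as an unordered orthonormal frame; sign flips are the usual PCA indeterminacy).

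The one subtlety — which I expect to be the main obstacle — is clarifying the meaning of "Nash" given the stop-gradient in $u^{\mu}_i$. If one reads Nash purely as a fixed point of the pseudo-gradient $\tilde{\nabla}^{\mu}_i=[I-P]\Sigma\,\hat{v}_i$, the condition $[I-P]\Sigma\,\hat{v}_i\parallel\hat{v}_i$ is satisfied by \emph{any} eigenvector of the deflated covariance, and uniqueness fails. So I would state explicitly that the claim refers to best response of the (sg-free) quadratic utility, where maximization on the sphere picks the top eigenvector of $\tfrac12(A+A^\top)$; the inductive hypothesis precisely ensures that this symmetrized part collapses back to $[I-P]\Sigma$, making the argument clean. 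No other step is delicate: Lemma~\ref{asymp_equiv} is not even needed here because the deflation identity $[I-P]\Sigma=\sum_{l\ge i}\lambda_l v_lv_l^\top$ gives the top-eigenvector characterization directly.
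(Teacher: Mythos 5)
Your proof follows essentially the same route as the paper's: induction on the player index, exploiting the strict hierarchy (player $i$'s utility depends only on $\hat{v}_{j<i}$) so that in any Nash profile, player $1$ is forced to $\pm v_1$, and then each subsequent $u^{\mu}_i$ reduces to the Rayleigh quotient of the deflated covariance $\sum_{l\ge i}\lambda_l v_l v_l^\top$, whose unique maximizer on $\mathcal{S}^{d-1}$ is $\pm v_i$ by the eigengap assumption. Where you go beyond the paper is in making explicit two points the published proof elides: (i) the stop-gradient issue --- the paper silently treats $u^{\mu}_i$ as an ordinary quadratic form when invoking Rayleigh-quotient maximization, whereas reading ``Nash'' as a fixed point of the pseudo-gradient $\tilde{\nabla}^{\mu}_i$ would admit \emph{any} eigenvector of the deflated matrix and uniqueness would fail; your observation that at exact parents $[I-P]\Sigma$ is symmetric (since $P$ and $\Sigma$ commute), so the symmetrized gradient of the sg-free utility collapses to $[I-P]\Sigma\hat{v}_i$, is exactly the bridge needed to make this rigorous --- and (ii) the sign ambiguity $\hat{v}_j=\pm v_j$, which you correctly note is harmless because only $\hat{v}_j\hat{v}_j^\top$ enters the deflation. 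These refinements don't change the logical structure but do tighten steps the paper leaves implicit; the only stylistic difference is that the paper frames the inductive step as ``$v_i$ is the unique best response to $v_{-i}$,'' whereas you more carefully argue that \emph{in any Nash profile} the players are successively pinned to $\pm v_1,\pm v_2,\dots$, which is the form needed for uniqueness rather than merely existence.
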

\begin{proof}
We will show by induction that each $v_i$ is the unique best response to $v_{-i}$, which implies they constitute the unique Nash equilibrium. First, consider player $1$'s utility. It is the Rayleigh quotient of $\cov{}$ because $\hat{v}_1$ is constrained to the unit-sphere, i.e., $u^{\mu}_1 = \hat{v}_1^\top \cov{} \hat{v}_1 = \frac{\hat{v}_1^\top \cov{} \hat{v}_1}{\hat{v}_1^\top \hat{v}_1}$. Therefore, we know $v_1$ maximizes $u^{\mu}_1$ and the maximizer is unique because its eigengap $g_1>0$. In game theory parlance, $v_1$ is a \emph{best response} to $v_{-1}$. The proof continues by induction. The utility of player $i$ is $u^{\mu}_i = \hat{v}_i^\top [I - \sum_{j < i} v_j v_j^\top] \cov{} \hat{v}_i$, which is the Rayleigh quotient with the subspace spanned by the top $(i-1)$ eigenvectors removed. Therefore, the maximizer of $u^{\mu}_i$ is the largest eigenvector in the remaining subspace, i.e., $v_i$. As before, $g_i > 0$, so this maximizer is unique. This shows that each $v_i$ is the unique best response to $v_{-i}$, therefore, the set of $v_i$ forms the unique Nash.
\end{proof}

Notice how the induction proof of Theorem~\ref{nash} relies on a) the hierarchy of vectors ($v_1$ does not depend on $v_{-1}$) and b) the fact that $u^{\mu}_i$ need only be a sensible utility when all player $i$'s parents are eigenvectors. We revisit this in conjunction with Figure~\ref{fig:alg_design} later in discussion section~\ref{ueg:util_update_util} to aid researchers in the design of future approaches.

The Nash property is important because it enables the use of any black-box procedure for computing best responses. Like prior work, we develop a gradient method for optimizing each utility, however, that is not a requirement. Any approach suffices if it can efficiently compute a best response.

\section{Experiments}
\label{exp}

As in EigenGame, we omit the projection of gradients onto the tangent space of the sphere; specifically, we omit line 8 in~\Algref{unbiased_eg}. As discussed in~\citet{gemp2021eigengame}, this has the effect of intelligently adapting the step size to use smaller learning rates near the fixed point. To ease comparison with previous work, we count the \emph{longest correct eigenvector streak} as introduced by~\citet{gemp2021eigengame}, which measures the number of eigenvectors that have been learned, in order, to within an angular threshold (e.g., $\pi/8$) of the true eigenvectors. We also measure how well the set of $\hat{v}_i$ captures the top-$k$ subspace with a normalized subspace distance: $1 - 1/k \cdot \Tr(U^*P) \in [0, 1]$ where $U^*=VV^{\dagger}$ and $P=\hat{V}\hat{V}^{\dagger}$~\citep{tang2019exponentially}. 
We provide additional experiments in Appendix~\ref{exp:synthetic}.





\begin{figure}[t]
    \begin{subfigure}[t]{.31\textwidth}
        \includegraphics[scale=0.31]{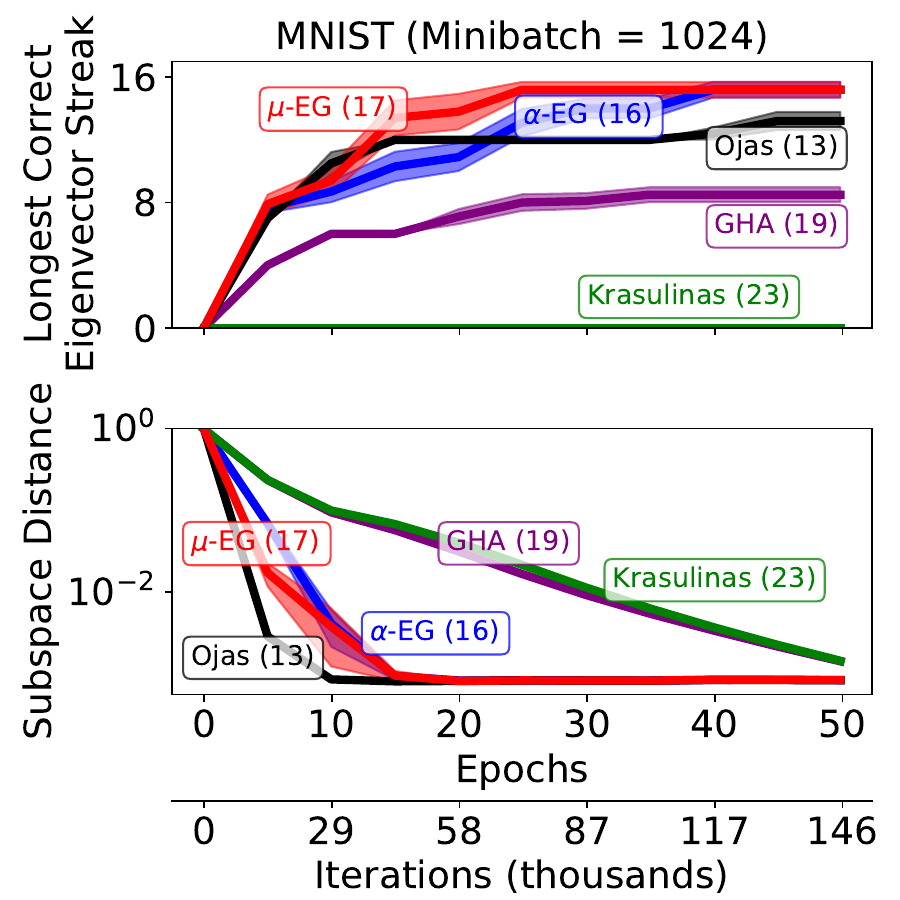}
    \end{subfigure}
    \begin{subfigure}[t]{.31\textwidth}
        \includegraphics[scale=0.31]{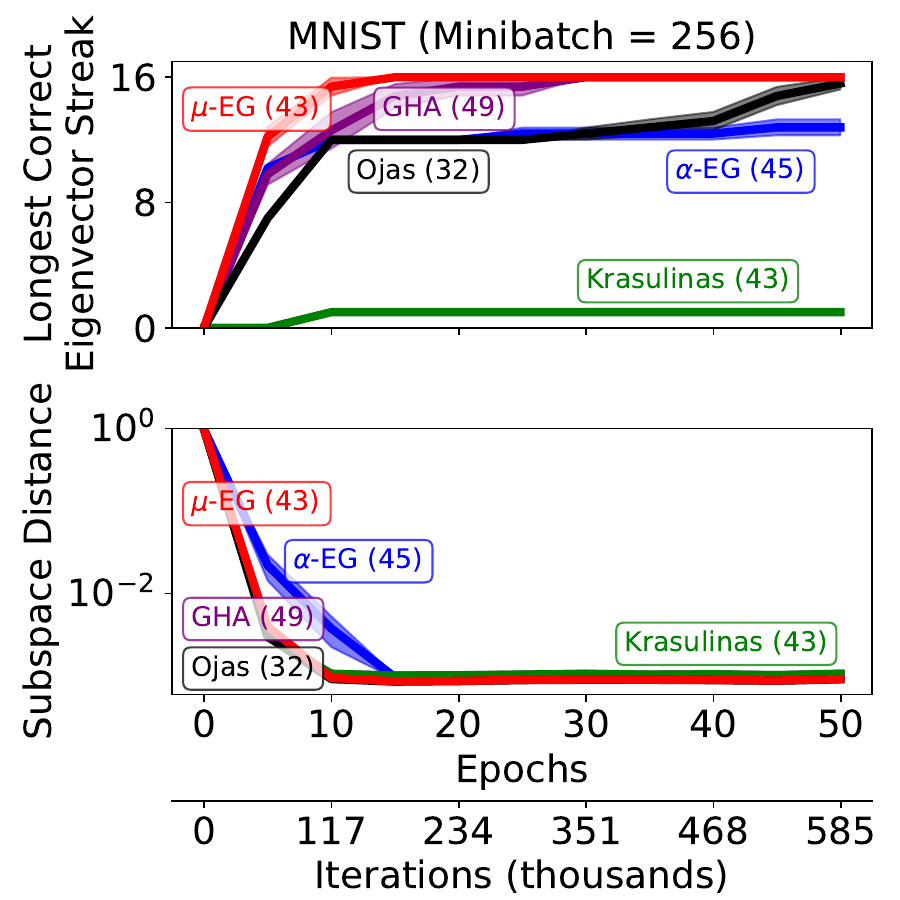}
    \end{subfigure}
    \begin{subfigure}[t]{.31\textwidth}
        \includegraphics[scale=0.31]{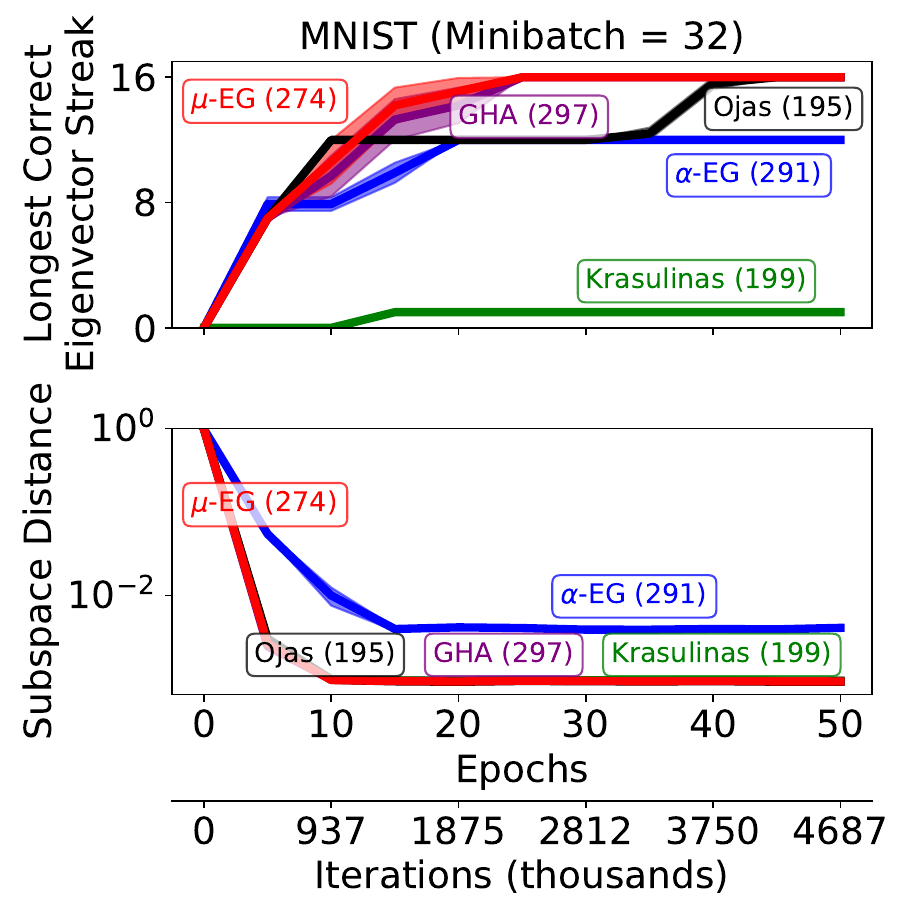}
    \end{subfigure}
    \caption{MNIST Experiment. Runtime (seconds) in legend on CPU ($m=1$). Each column evaluates a different minibatch size $\in \{1024, 256, 32\}$. Shading indicates $\pm$ standard error of the mean. Learning rates were chosen from $\{10^{-3},\ldots,10^{-6}\}$ on $10$ held out runs. Solid lines denote results with the best performing learning rate. All plots show means over $10$ trials (randomness arising from minibatches and initialization). Shaded regions highlight $\pm$ standard error of the mean.}
    \label{fig:exp:mnist}
\end{figure}

\paragraph{MNIST.} We compare \seg{} against \oeg{}, GHA~\citep{sanger1989optimal}, Matrix Krasulina~\citep{tang2019exponentially}, and Oja's algorithm~\citep{allen2017first} on the \mnist{} dataset. We flatten each image in the training set to obtain a $60,000\times 784$ dimensional matrix $X$.
Figure~\ref{fig:exp:mnist} demonstrates \seg{}'s robustness to minibatch size. It performs best in the longest streak metric and better than \oeg{} in subspace distance. We attribute this improvement to its unbiased updates and additional acceleration effects which we discuss in detail in section~\ref{acc}.


\paragraph{Meena conversational model.}

This dataset consists a subset of the 40 billion words used to train the transformer-based Meena language model \citep{adiwardana2020towards}. The subset was preprocessed to remove duplicates and then embedded using the trained model.

The dataset consists of $n\approx14$ billion embeddings each with dimensionality $d=2560$; its total size is $131$TB. Due to its moderate dimensionality we can exactly compute the ground truth solution by iteratively accumulating the covariance matrix of the data and computing its eigendecomposition. On a single machine this takes 1.5 days (but is embarrassingly parallelizable with MapReduce).

\begin{figure}
    \centering
    \includegraphics[scale=0.35]{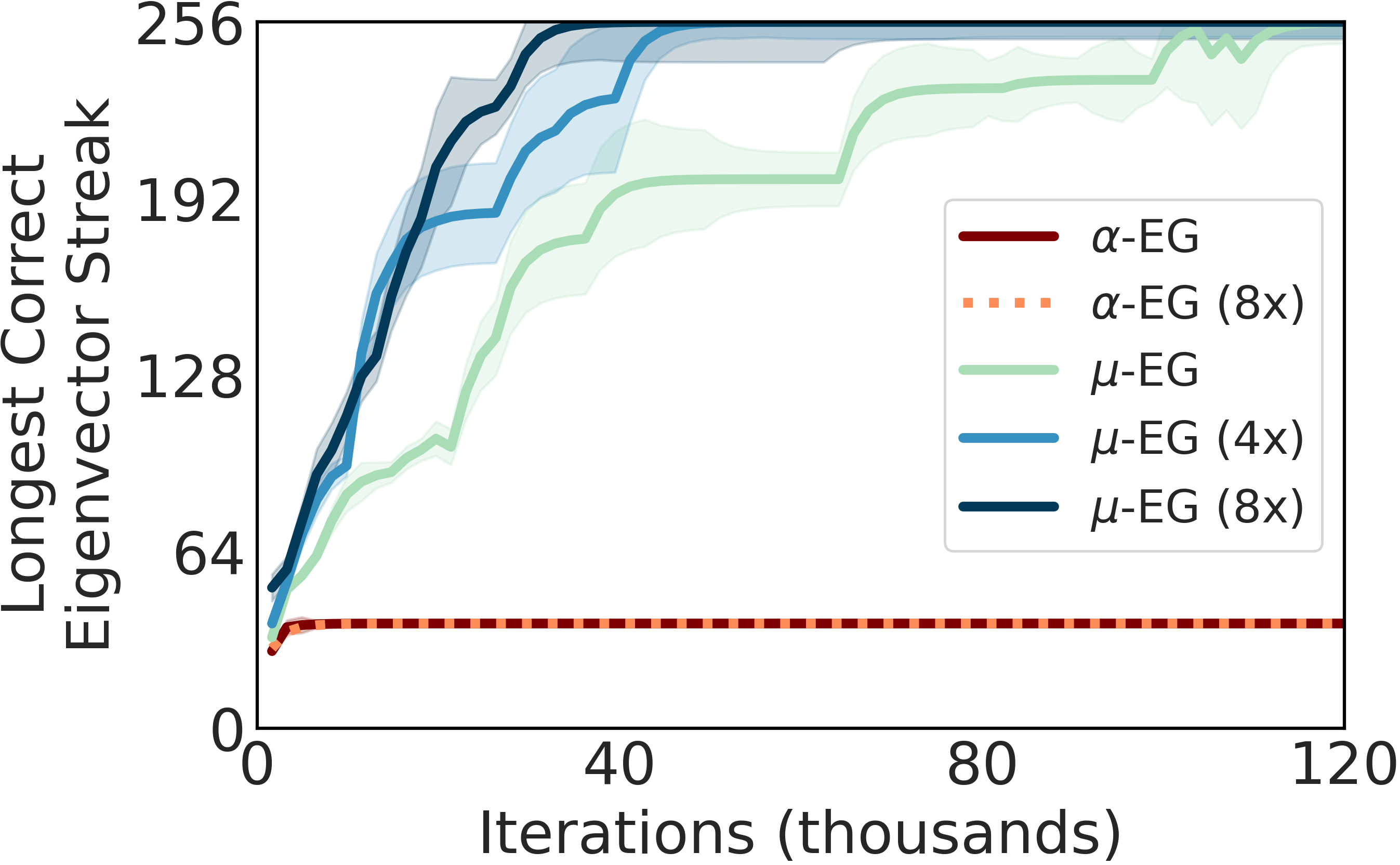}
    \caption{Comparison between \seg{} and \oeg{} with different degrees of data parallelism (in parentheses) on the Meena dataset.}
    \label{fig:exp:scam}
    \vspace{-0.2cm}
\end{figure}

We use minibatches of size 4,096 in each TPU. We do model parallelism across 4 TPUs so we see 16,384 samples per iteration. We test two additional degrees of data parallelism with $4\times$ (16 TPUs, 65,536 samples) and $8\times$ (32 TPUs, 131,072 samples) the amount of data per iteration respectively. We compute and apply updates using SGD with a learning rate of $5\times10^{-5}$ and Nesterov momentum with a factor of 0.9. 

Figure~\ref{fig:exp:scam} compares the mean performance of \seg{} against \oeg{} as a function of the degree of parallelism in computing the top $k=256$ eigenvectors (standard errors computed over 5 random seeds). Each TPU is tasked with learning 32 contiguous eigenvectors. We see that increasing the degree of parallelism has no effect on the performance of \oeg{}. As expected, it is unable to take advantage of the higher data throughput since its updates are biased and cannot be meaningfully linearly combined across copies. In contrast, the performance of \seg{} scales with the effective batch size achieved through parallelism. \seg{}~($8\times$) is able to recover 256 eigenvectors in less than 40,000 iterations in 2 hours 45 minutes (approximately 0.5 epochs). 

\vspace{-0.2cm}
\paragraph{Spectral clustering on graphs.}
We conducted an experiment on learning the eigenvectors of the graph Laplacian of a social network graph~\citep{leskovec2012learning} for the purpose of spectral clustering. The eigenvalues of the graph Laplacian reveal several interesting properties as well such as the number of connected components, an approximation to the sparsest cut, and the diameter of a connected graph~\citep{chung1994upper}.

Given a graph with a set of nodes $\mathcal{V}$ and set of edges $\mathcal{E}$, the graph Laplacian can be written as $\mathcal{L} = X^\top X$ where each row of the incidence matrix $X \in \mathbb{R}^{|\mathcal{E}| \times |\mathcal{V}|}$ represents a distinct edge; $X_{e = (i, j) \in \mathcal{E}}$ is a vector containing only $2$ nonzero entries, a $1$ at index $i$ and a $-1$ at index $j$~\citep{horaud2009short}. In this setting, the eigenvectors of primary interest are the bottom-$k$ ($\lambda_{|\mathcal{V}|}, \lambda_{|\mathcal{V}|-1}, \ldots$) rather than the top-$k$ ($\lambda_1, \lambda_2, \ldots$), however, a simple algebraic manipulation allows us to reuse a top-$k$ solver. By defining the matrix $\mathcal{L}^{-} = \lambda^* I - \mathcal{L}$ with $\lambda^* > \lambda_1$, we ensure $\mathcal{L}^{-} \succ 0$ and the top-$k$ eigenvectors of $\mathcal{L}^{-}$ are the bottom-$k$ of $\mathcal{L}$. The update in~\eqref{eq:delta_eg} is transformed into $\tilde{\nabla}^{\mu}_i = (\lambda^* I - \mathcal{L}) \hat{v}_i - \sum_{j < i} \big(\hat{v}_i^\top (\lambda^* I - \mathcal{L}) \hat{v}_j \big) \hat{v}_j$.
We provide efficient pseudo-code in Appendix~\ref{app-graphs}.

The \href{https://snap.stanford.edu/data/gemsec-Facebook.html}{Facebook graph} consists of $134,833$ nodes, $1,380,293$ edges, and $8$ connected components, each formed by a set of Facebook pages belonging to a distinct category, e.g., Government, TV shows, etc.~\citep{snapnets,rozemberczki2019gemsec}. We add a single edge between every pair of components to create a connected graph. By projecting this graph onto the bottom $8$ eigenvectors of the graph Laplacian using \segshort{} ($M=1, n'=\eta_t=\frac{\vert \mathcal{E} \vert}{1000}$) and then running $k$-means clustering~\citep{scikit-learn}, we are able to recover the ground truth clusters (see Figure~\ref{fig:exp:fb}) with $99.92\%$ accuracy. The experiment was run on a single CPU.

\begin{figure}
    \begin{subfigure}{.58\textwidth}
    \includegraphics[scale=0.3]{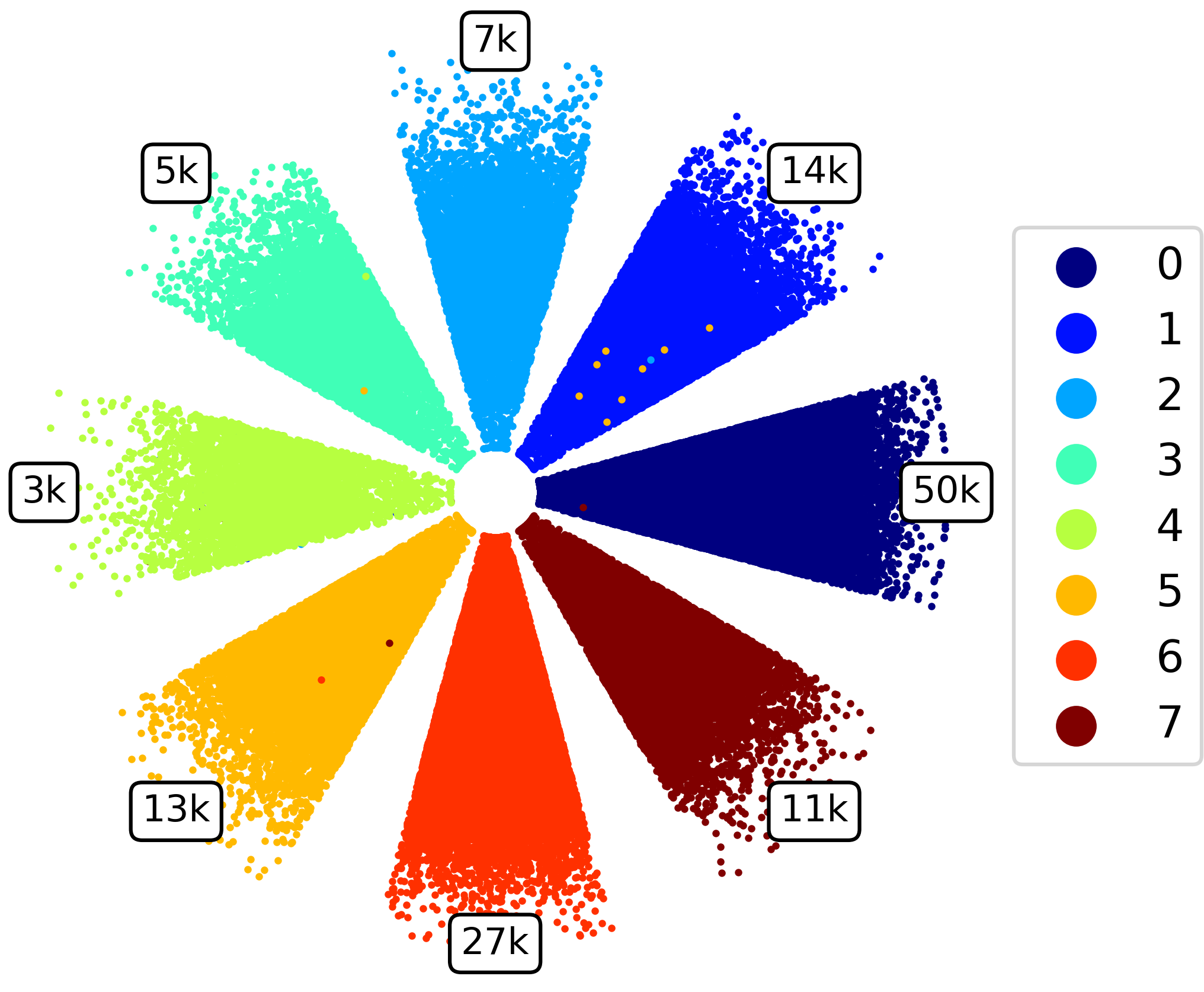}
    \hspace{0.1cm}
    \includegraphics[scale=0.3]{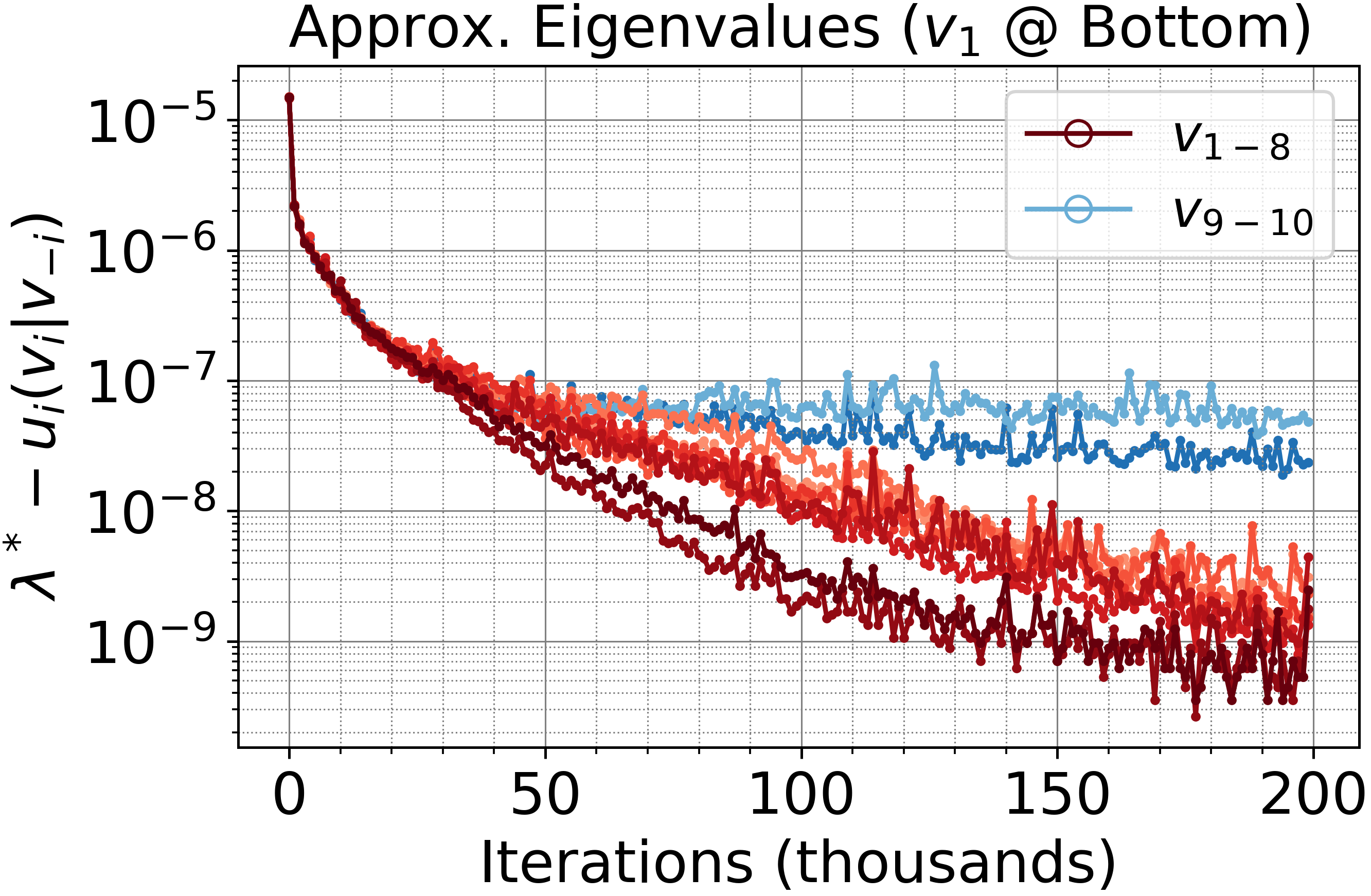}
    \caption{\label{fig:exp:fb}}
    \end{subfigure}
    \begin{subfigure}{.41\textwidth}
    \centering
    \begin{tikzpicture}
      \matrix (m) [matrix of math nodes,row sep=3em,column sep=0.3em,minimum width=2em]
      {
        \qquad
         u^{\mu}_i 
         \qquad
         & 
         \qquad
         \qquad
         & 
         \qquad
         u^{\alpha}_i 
         \qquad
         \\
         \tilde{\nabla}^{\mu}_i 
         & 
         ~ 
         & 
         \nabla^{\alpha}_i \\};
      \path[-stealth, <->]
        (m-1-3) edge node [left] {\texttt{Var} \& \texttt{Align}} (m-2-3);
      \path[-stealth, ->]
        (m-2-1) edge node [right] {$\sg$} (m-1-1)
        (m-2-3) edge node [below] {remove bias} (m-2-1);
    \end{tikzpicture}
    \caption{\label{fig:alg_design}}
    \end{subfigure}
    \caption{(\ref{fig:exp:fb}) Facebook Page Networks. (Left) Petals differentiate ground truth clusters; colors differentiate learned clusters. Petals are ideally colored according to the color bar starting with the rightmost petal and proceeding counterclockwise. Numbers indicate ground truth cluster size. Clusters are extracted by running $k$-means clustering on the learned eigenvectors $\hat{V} \in \mathbb{R}^{\vert \mathcal{V} \vert \times k}$ (samples on rows). (Right) Rayleigh quotient plot reveals a gap between the $8$th and $9$th eigenvalues indicating $\approx 8$ clusters exist. (\ref{fig:alg_design}) Relationships between utilities and updates. An arrow indicates the endpoint is reasonably derived from the origin; the lack of an arrow indicates the direction is unlikely.}
    \label{fig:exp:fb_diagram}
    \vspace{-0.1cm}
\end{figure}


\section{Discussion}
\label{disc}

\subsection{Utilities to updates and back}
\label{ueg:util_update_util}


Figure~\ref{fig:alg_design} summarizes the relationships advising the designs of the various EigenGame algorithms. Starting from the \oeg{} utility, its update is arrived at by simply following the standard gradient ascent paradigm. In noticing that stochastic estimates of the gradient are biased, we arrive at the \seg{} update by considering how to remove this bias in a principled manner. 

Sacrificing the exact steepest decent direction for a direction that allows unbiased estimates is a tradeoff that in this case has benefits. Also, while $\tilde{\nabla}^{\mu}_i$ is not a gradient (except with exact parents), the new penalties have properties (above) that make them intuitively more desirable than the originals; they are adaptive to the state of the system (discussed further in section~\ref{acc}).

We derive pseudo-utilities with desired theoretical properties by \emph{integrating} the new updates with help from the stop gradient operator. However, it is unlikely that this utility would be developed independently of these steps to solve the problem at hand (see Appendix~\ref{alg_design} for more details). This suggests an alternative approach to algorithm design complementary to the optimization perspective: directly designing updates themselves which converge to the desired solution, reminiscent of previous paradigms that drove neuro-inspired learning rules.

\subsection{Bridging Hebbian and optimization approaches}
\label{related_work:gha}

The Generalized Hebbian Algorithm (GHA)~\citep{sanger1989optimal,gang2019fast,chen2019constrained} update direction for $\hat{v}_i$ with inexact parents $\hat{v}_j$ is similar to \seg{}:
\begin{align}
    \Delta^{gha}_i &= \cov{} \hat{v}_i - \sum_{j \textcolor{red}{\le} i} (\hat{v}_i^\top \cov{} \hat{v}_j) \hat{v}_j.
\end{align}
$\cov{}$ appears linearly in this update so GHA can also be parallelized. In contrast to \seg{}, GHA additionally penalizes the alignment of $\hat{v}_i$ to itself and removes the unit norm constraint on $\hat{v}_i$ (not shown). Without any constraints, GHA overflows in experiments. We take the approach of~\citet{gemp2021eigengame} and constrain $\hat{v}_i$ to the unit-ball ($||\hat{v}_i|| \le 1$) rather than the unit-sphere ($||\hat{v}_i||=1$).

The connection between GHA and \seg{} is interesting because unlike \seg{}, GHA is a Hebbian learning algorithm inspired by neuroscience and its update rule is not motivated from the perspective of maximizing of a utility function. Game formulations of classical machine learning problems may provide a bridge between statistical and biologically inspired viewpoints.

\nocite{brockett1991dynamical}

\section{Conclusion}
\label{conc}

We introduced \seg{}, an unbiased, globally convergent, parallelizable algorithm that recovers the top-$k$ eigenvectors of a symmetric positive definite matrix. We demonstrated the performance of \seg{} on large scale dimension reduction and clustering problems. We discussed technical details of \seg{} within the context of game theory, machine learning and neuroscience.

Like its predecessor, \seg{} is a $k$-player, general-sum game allowing model parallelism over players; our unbiased reformulation allows even greater parallelism over data. Furthermore, the hierarchy and Nash property enable the exploration of more sophisticated best responses.


\seg{}'s improved robustness to smaller minibatches makes it more amenable to being used as part of deep learning, optimization~\citep{krummenacher2016scalable}, and regularization~\citep{miyato2018spectral} techniques which leverage spectral information of gradient covariances or Hessians. Graph spectral methods have also recently shown to be related to state-of-the-art representation learning algorithms \citep{haochen2021provable} further cementing the importance of efficient SVD algorithms in modern machine learning. 


\paragraph{Acknowledgements.} We would like to thank Trevor Cai, Rosalia Schneider, Dimitrios Vytiniotis for invaluable help with optimizing algorithm performance on TPU. We also thank Maribeth Rauh, Zonglin Li, Daniel Adiwardana and the Meena team for providing us with data and assistance. And finally, we thank Alexander Novikov for helpful feedback on the manuscript.

\bibliography{bib}
\bibliographystyle{namedabbrv}

\appendix

\onecolumn
\section{Experiments on synthetic data}
\label{exp:synthetic}

\begin{figure}[ht!]
    \centering
    \begin{subfigure}[t]{.45\textwidth}
    \includegraphics[scale=0.41]{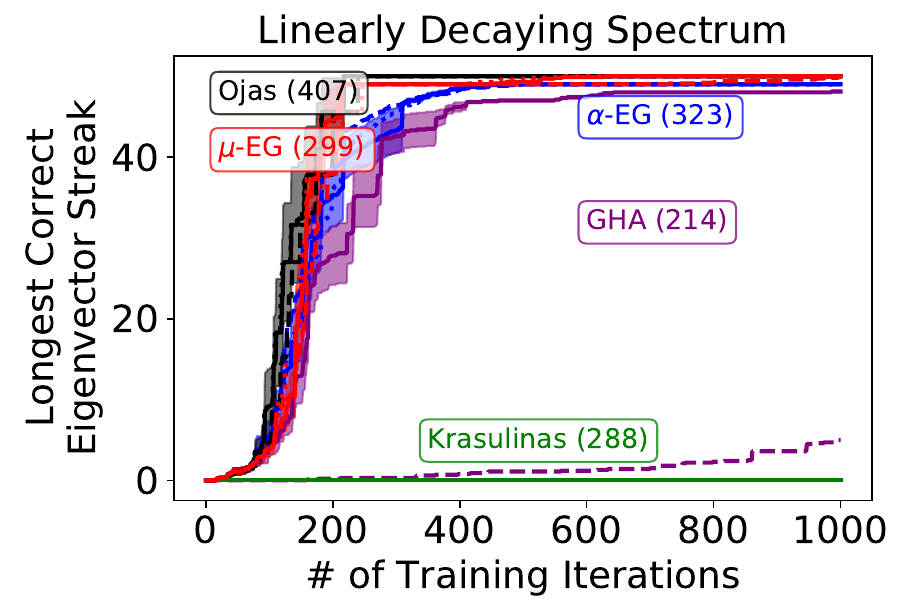}
    \end{subfigure}
    \begin{subfigure}[t]{.45\textwidth}
    \includegraphics[scale=0.41]{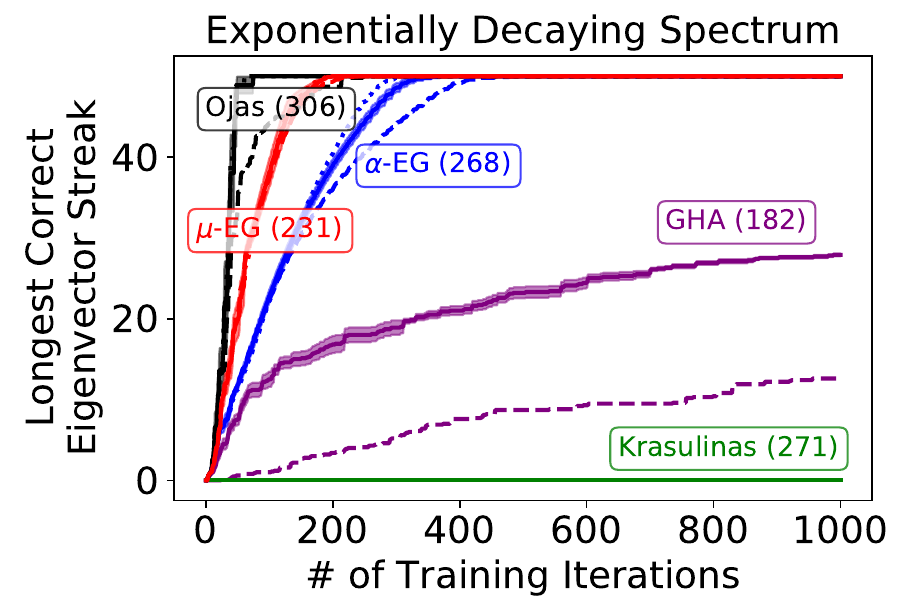}
    \end{subfigure}
    \caption{Synthetic Experiment. Runtime (milliseconds) in legend.}
    \label{fig:exp:synthetic}
\end{figure}

We validate \seg{} in a full-batch setting on two synthetic datasets: one with exponentially decaying spectrum; the other with a linearly decaying spectrum. Figure~\ref{fig:exp:synthetic} shows \seg{} outperforms \oeg{} on the former and matches its performance on the latter. We discuss possible reasons for this gap in the discussion in Section~\ref{disc}.

\section{Parallelized Algorithm}

\paragraph{Riemannian Manifolds.}
Before introducing an algorithm for \seg{}, we first briefly review necessary terminology for learning on Riemannian manifolds~\cite{absil2009optimization}, specifically for the sphere. The notation $\mathcal{T}_{\hat{v}_i}\mathcal{S}^{d-1}$ denotes the set of vectors tangent to the sphere at a point $\hat{v}_i$ (i.e., any vector orthogonal to $\hat{v}_i$). $R_{\hat{v}_i}(z) = \frac{\hat{v}_i + z}{||\hat{v}_i + z||}$ is the commonly used restriction of the retraction on $\mathcal{S}^{d-1}$ to the tangent bundle at $\hat{v}_i$ (i.e., step in tangent direction $z$ and then unit-normalize the result). The operator $\Pi_{\hat{v}_i}(y) = (I - \hat{v}_i^\top \hat{v}_i) y$ projects the direction $y$ onto $\mathcal{T}_{\hat{v}_i}\mathcal{S}^{d-1}$. Combining these tools together results in a movement along the Riemannian manifold:
$\hat{v}_i^{(t+1)} \leftarrow R_{\hat{v}_i}\big( \Pi_{\hat{v}_i}(y) \big)$.


We present pseudocode for \seg{} below where computation is parallelized both over the $k$ players and over $M$ machines per player.

\begin{algorithm}[h!]
\begin{algorithmic}[1]
    \STATE Given: data stream $X_t \in \mathbb{R}^{n' \times d}$, number of parallel machines $M$ per player (minibatch size per machine $n''=\frac{n'}{M}$), initial vectors $\hat{v}_i^0 \in \mathcal{S}^{d-1}$, step size sequence $\eta_t$, and number of iterations $T$.
    \STATE $\hat{v}_i \leftarrow \hat{v}_i^0$ for all $i$
    \FOR{$t = 1: T$}
        \PARFOR{$i = 1: k$}
            \PARFOR{$m = 1: M$}
            \STATE $\texttt{rewards} \leftarrow X_{tm}^\top X_{tm} \hat{v}_i$
            \STATE $\texttt{penalties} \leftarrow \sum_{j < i} \langle X_{tm} \hat{v}_i, X_{tm} \hat{v}_j \rangle \hat{v}_j$
            \STATE $\tilde{\nabla}^{\mu}_{im} \leftarrow (\texttt{rewards} - \texttt{penalties}) / n''$
            \STATE $\tilde{\nabla}^{\mu,R}_{im} \leftarrow \tilde{\nabla}^{\mu}_{im} - \langle \tilde{\nabla}^{\mu}_{im}, \hat{v}_i \rangle \hat{v}_i$
            \ENDPARFOR
        \STATE $\tilde{\nabla}^{\mu,R}_{i} \leftarrow \frac{1}{M} \sum_{m} [ \tilde{\nabla}^{\mu,R}_{im} ]$
        \STATE $\hat{v}_i' \leftarrow \hat{v}_i + \eta_t \tilde{\nabla}^{\mu,R}_{i}$
        \STATE $\hat{v}_i \leftarrow \frac{\hat{v}_i'}{|| \hat{v}_i' ||}$
        \ENDPARFOR
    \ENDFOR
    \STATE return all $\hat{v}_i$
\end{algorithmic}
\caption{\seg{}$^R$}
\label{unbiased_eg_par}
\end{algorithm}
\section{Global Stochastic Convergence}
\label{app:global_conv}

\begin{reptheorem}{thm:global_conv}[Global Convergence]
Given a positive definite covariance matrix $\cov{}$ with the top-$k$ eigengaps positive and a square-summable, not summable step size sequence $\eta_t$ (e.g., $1/t$), \Algref{unbiased_eg} converges to the top-$k$ eigenvectors asymptotically ($\lim_{T \rightarrow \infty}$) with probability $1$.
\end{reptheorem}
\begin{proof}
Assume none of the $\hat{v}_i$ are initialized to an angle exactly $90^\circ$ away from the true eigenvector: $\langle \hat{v}_i, v_i \rangle \ne 0$. The set of vectors $\{\hat{v}_i: \langle \hat{v}_i, v_i \rangle = 0\}$ has Lebesgue measure $0$, therefore, the above assumption holds w.p.1. The update direction for the top eigenvector $\hat{v}_1$ is exactly equal to that of \oeg{} ($\tilde{\nabla}^{\mu}_1 = \nabla^{\alpha}_1$), therefore, they have the same limit points for $\hat{v}_1$. The proof then proceeds by induction. As $\hat{v}_{j < i}$ approach their limit points, the update for the $i$th eigenvector $\hat{v}_i$ approaches that of \oeg{} ($\tilde{\nabla}^{\mu}_i = \nabla^{\alpha}_i$) and, by Lemma~\ref{errprop}, the stable region of \seg{} also shrinks to a point around the top-$k$ eigenvectors.

Denote the ``update field'' $H(v)$ to match the work of~\cite{shah2019stochastic}. $H(v)$ is simply the concatenation of all players’ Riemannian update rules, i.e., all players updating in parallel using their Riemannian updates:
\begin{align}
H(v) &= [(I - \hat{v}_1 \hat{v}_1^\top) \Delta^{\mu}_1, \ldots, (I - \hat{v}_k \hat{v}_k^\top) \Delta^{\mu}_k] : \mathbb{R}^{kd} \rightarrow \mathbb{R}^{kd}
\end{align}
where $\Delta^{\mu}_i$ is defined in~\eqref{eq:mueg_update} and $(I - p_1 p_1^\top) \Delta^{\mu}_i$ projects $\Delta^{\mu}_i$ onto the tangent space of player $i$’s unit sphere.

The result is then obtained by applying Theorem 7 of~\citet{shah2019stochastic} with the following information: a) the unit-sphere is a compact manifold with an injectivity radius of $\pi$, b) the update field is a polynomial in $\{v_i\}$ and therefore smooth (analytic), and c) by Lemma~\ref{lemma:mds} (see Appendix~\ref{app:mds}) the update noise constitutes a bounded martingale difference sequence.
\end{proof}


While the convergence proof for $\alpha$-EG provides finite-sample rates, it only applies to the algorithm applied sequentially (not parallelized over eigenvectors) and in the deterministic setting (minibatch contains the entire data set). The experiments in ~\citet{gemp2021eigengame} apply the algorithm in parallel and with mini batch sizes, meaning the $\alpha$-EG theorem does not actually apply to their experimental setting. That is to say, the $\alpha$-EG paper proposes updating eigenvectors in parallel in practice despite the lack of convergence guarantee.

In contrast, our convergence theorem applies to $\mu$-EG when applied in parallel (over the eigenvectors) and in the stochastic setting (with mini batch sizes), which is what we examine empirically in our experiments. The downside is that we do not provide finite-sample convergence rates.

Although we do not provide convergence rates, Lemma 1 proves that the $\mu$-EG update converges to the $\alpha$-EG update for each eigenvector, so intuitively, we expect the convergence rates to be relatively similar given that the algorithms are equivalent in the limit. Note that in the full batch setting where stochasticity does not conflate the differences between the two algorithms, Figure~\ref{fig:exp:synthetic} in Appendix~\ref{exp:synthetic} empirically supports the similarity of the convergence rates for the two algorithms. Figure~\ref{fig:exp:mnist} (minibatch of $1024$) which looks at a large (but not full) minibatch size, also shows a small difference between convergence for the two algorithms.

In summary, the $\alpha$-EG convergence theorem is impractical\textemdash it provides convergence rates for a (non-parallel) algorithm in the (non-stochastic) setting, which is a combination that $\alpha$-EG paper does not suggest be applied in practice. In contrast, our $\mu$-EG convergence theorem is practical\textemdash it provides asymptotic convergence for a parallel algorithm in the stochastic setting.

\paragraph{Difficulties Obtaining Finite Sample Rates} In consideration of a finite sample convergence result, we consulted~\cite{durmus2020convergence}. The primary obstacle to applying their convergence theorem is the construction of a suitable Lyapunov function to satisfy their Assumption A.2 stated on page 4. Constructing Lyapunov functions is typically a tedious, unpredictable process. The work in~\cite{durmus2020convergence} is very recent and finite sample convergence of Riemannian stochastic approximation (i.e., update directions are not gradients of any function) schemes is cutting edge, highly technical research. This is in contrast to Riemannian optimization (i.e., update directions are the gradient of a function), which is much more mature. We hope theory advances in the near future to a point where we can more easily provide convergence rates for algorithms like \oeg{}.

\section{Error Propagation / Sensitivity Analysis}
\label{app:err_prop}

\begin{lemma}
\label{errprop}
An $\mathcal{O}(\epsilon)$ angular error of parent $\hat{v}_{j < i}$ implies an $\mathcal{O}(\epsilon)$ angular error in the location of the solution for $\hat{v}_i$.
\end{lemma}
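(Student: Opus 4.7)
The plan is to carry out a first-order perturbation analysis of the best-response map for player $i$. The ``solution for $\hat{v}_i$'' is the maximizer of $u^{\mu}_i$ over $\mathcal{S}^{d-1}$, equivalently the stable fixed point of the ($\mu$-EG) update given fixed parents $\hat{v}_{j<i}$; by the same reasoning used in Theorem~\ref{nash}, this fixed point is a dominant eigenvector of the matrix $M(\hat{v}_{j<i}) = [I - P']\Sigma$, where $P' = \sum_{j<i} \hat{v}_j \hat{v}_j^\top$. When the parents are exact, $P' = P = \sum_{j<i} v_j v_j^\top$ shares the eigenbasis of $\Sigma$, so $M(v_{j<i}) = \sum_{l \geq i} \lambda_l v_l v_l^\top$ is symmetric with simple top eigenvalue $\lambda_i$ and top eigenvector $v_i$. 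So the task reduces to showing that the top eigenvector of $M$ is Lipschitz in $P'$ around $P$.

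Next, I would bound the matrix perturbation. Writing each perturbed parent as $\hat{v}_j = \cos(\epsilon_j) v_j + \sin(\epsilon_j) w_j$ with $w_j \perp v_j$ and $|\epsilon_j| = \mathcal{O}(\epsilon)$, a direct expansion of $\hat{v}_j \hat{v}_j^\top - v_j v_j^\top$ yields an operator-norm bound of $\mathcal{O}(\epsilon)$ per term, so $\|P' - P\| = \mathcal{O}(\epsilon)$ after summing over at most $k-1$ parents. Submultiplicativity then gives
\begin{equation*}
\|M(\hat{v}_{j<i}) - M(v_{j<i})\| \leq \|P' - P\|\,\|\Sigma\| \leq \mathcal{O}(\lambda_1 \epsilon) = \mathcal{O}(\epsilon).
\end{equation*}

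The conclusion will follow from standard eigenvector perturbation theory. Because $v_i$ is a simple eigenvector of the symmetric matrix $M(v_{j<i})$ separated from the rest of the spectrum by the gap $\delta_i := \min\{\lambda_i - \lambda_{i+1},\, \lambda_i\} > 0$ (using the distinct-eigenvalue assumption of Theorem~\ref{nash}), a Davis--Kahan bound applied to the perturbed dominant invariant subspace yields $\sin\angle(\hat{v}_i^*, v_i) \leq \|M' - M\|/\delta_i = \mathcal{O}(\epsilon)$, and hence $\angle(\hat{v}_i^*, v_i) = \mathcal{O}(\epsilon)$ as claimed.

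The main obstacle will be handling the asymmetry of $M(\hat{v}_{j<i})$: away from the exact parent setting, $P'$ need not commute with $\Sigma$, so the perturbed matrix is non-normal and the classical symmetric Davis--Kahan theorem does not apply verbatim. I would address this in one of two equivalent ways: either analyze the Riemannian gradient (which depends only on the symmetric part $\tfrac{1}{2}(M + M^\top)$, a genuinely symmetric $\mathcal{O}(\epsilon)$ perturbation of $M(v_{j<i})$ to which Davis--Kahan applies directly), or invoke a first-order resolvent/Bauer--Fike bound for a simple eigenvalue of a non-normal matrix, whose conditioning is well-controlled because the unperturbed matrix is symmetric with non-degenerate gap. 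Either route delivers the same $\mathcal{O}(\epsilon)$ rate and preserves the inductive structure used downstream in the convergence proof.
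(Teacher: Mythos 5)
Your proposal takes a genuinely different route from the paper. You reduce the lemma to a matrix perturbation problem: the solution for $\hat{v}_i$ is identified with the dominant eigenvector of the deflated matrix $M(\hat{v}_{j<i}) = [I - P']\Sigma$, you show $\|M(\hat{v}_{j<i}) - M(v_{j<i})\| = \mathcal{O}(\epsilon)$ by bounding $\|P' - P\|$, and then invoke Davis--Kahan or Bauer--Fike. The paper instead works directly with the update field: it writes $\hat{v}_j = v_j + w_j$ with $\|w_j\| = \mathcal{O}(\epsilon)$, bounds the Euclidean gap between the Riemannian direction $\tilde{\nabla}^{\mu}_i$ with approximate vs.\ exact parents by $\mathcal{O}(\epsilon)$, and concludes via a Lyapunov argument on the cosine-shaped $\alpha$-EG utility landscape (plus a Jacobian argument at $v_i^{\perp}$ showing the repeller structure persists). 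Both reach the same $\mathcal{O}(\epsilon)$ bound; your eigenvector-perturbation route is cleaner if all you need is the location of the fixed point, while the paper's landscape argument simultaneously establishes the basin-of-attraction geometry that the downstream convergence theorem relies on.

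However, one of your two proposed resolutions of the non-normality obstacle does not go through as stated. You claim the Riemannian gradient depends only on the symmetric part $\tfrac{1}{2}(M+M^\top)$; that is true of the genuine gradient of $\hat{v}_i^\top M \hat{v}_i$, but the $\mu$-EG update is the stop-gradient pseudo-gradient $\tilde{\nabla}^{\mu}_i = M\hat{v}_i$, and its tangential projection $\Pi_{\hat{v}_i}(M\hat{v}_i) = \Pi_{\hat{v}_i}(S\hat{v}_i) + A\hat{v}_i$ (with $M = S + A$, $S$ symmetric, $A$ antisymmetric) retains the antisymmetric contribution $A\hat{v}_i$, since $A\hat{v}_i$ is automatically tangent at $\hat{v}_i$. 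Relatedly, the ``equivalently'' in your opening sentence is not an equivalence once $M$ is non-normal: the maximizer of $\hat{v}_i^\top M\hat{v}_i$ and the stable fixed point of the projected flow $\Pi_{\hat{v}_i}(M\hat{v}_i)$ are eigenvectors of different matrices (the symmetric part vs.\ $M$ itself). Both happen to lie within $\mathcal{O}(\epsilon)$ of $v_i$, so the lemma's conclusion survives, but route one engages the wrong object. Your second route, a Bauer--Fike/resolvent bound for the simple top eigenvalue of the non-normal $M$, is the one that applies; you would also want to verify, as the paper does via the Jacobian, that the perturbed eigenvector remains the unique stable fixed point of the projected dynamics, since Bauer--Fike locates the eigenvector but says nothing about stability.
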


\begin{proof}
The proof proceeds in three steps:
\begin{enumerate}
    \item $\mathcal{O}(\epsilon)$ angular error of parent $\implies$ $\mathcal{O}(\epsilon)$ Euclidean error of parent
    \item $\mathcal{O}(\epsilon)$ Euclidean error of parent $\implies$ $\mathcal{O}(\epsilon)$ Euclidean error of norm of child gradient
    \item $\mathcal{O}(\epsilon)$ Euclidean error of norm child gradient + instability of minimum at $\pm \frac{\pi}{2}$ $\implies$ $\mathcal{O}(\epsilon)$ angular error of child's solution.
\end{enumerate}

Angular error in the parent can be converted to Euclidean error by considering the chord length between the mis-specified parent and the true parent direction. The two vectors plus the chord form an isoceles triangle with the relation that chord length $l = 2 \sin(\frac{\epsilon}{2})$ is $\mathcal{O}(\epsilon)$ for $\epsilon \ll 1$.

Next, write the mis-specified parents as $\hat{v}_j = v_j + w_j$ where $||w_j||$ is $\mathcal{O}(\epsilon)$ as we have just shown. Let $b$ equal the difference between the Riemannian update direction $\tilde{\nabla}^{\mu}_i$ with approximate parents and that with exact parents. All directions we consider here are the Riemannian directions, i.e., they have been projected onto the tangent space of the sphere. Then
\begin{align}
    b &= \tilde{\nabla}^{\mu,e}_i - \tilde{\nabla}^{\mu}_i = (\underbrace{I - \hat{v}_i \hat{v}_i^\top}_{\text{projection onto sphere}}) \sum_{j < i} \Big[ (\hat{v}_i^\top \cov{} \hat{v}_j) \hat{v}_j - (\hat{v}_i^\top \cov{} v_j) v_j \Big]
\end{align}
and the norm of the difference is
\begin{align}
    \\ ||b|| &= ||(I - \hat{v}_i \hat{v}_i^\top) \sum_{j < i} \Big[ (\hat{v}_i^\top \cov{} \hat{v}_j) \hat{v}_j - (\hat{v}_i^\top \cov{} v_j) v_j \Big] ||
    \\ &\le ||I - \hat{v}_i \hat{v}_i^\top|| \cdot ||\sum_{j < i} \Big[ (\hat{v}_i^\top \cov{} \hat{v}_j) \hat{v}_j - (\hat{v}_i^\top \cov{} v_j) v_j \Big] ||
    \\ &\le ||\sum_{j < i} \Big[ (\hat{v}_i^\top \cov{} \hat{v}_j) \hat{v}_j - (\hat{v}_i^\top \cov{} v_j) v_j \Big] ||.
\end{align}

We can further bound the summands with
\begin{align}
    ||(\hat{v}_i^\top \cov{} \hat{v}_j) \hat{v}_j - (\hat{v}_i^\top \cov{} v_j) v_j|| &= ||(\hat{v}_j \hat{v}_j^\top - v_j v_j^\top) \cov{} \hat{v}_i||
    \\ &\le ||\hat{v}_j \hat{v}_j^\top - v_j v_j^\top|| ||\cov{} \hat{v}_i||
    \\ &\le \lambda_1 ||\hat{v}_j \hat{v}_j^\top - v_j v_j^\top||
    \\ &= \lambda_1 ||(v_j + w_j) (v_j + w_j)^\top - v_j v_j^\top||
    \\ &= \lambda_1 ||w_j v_j^\top + v_j w_j^\top + w_j w_j^\top||
    \\ &\le \lambda_1 (||w_j v_j^\top|| + ||v_j w_j^\top|| + ||w_j w_j^\top||)
    \\ &= \mathcal{O}(\epsilon).
\end{align}

This upper bound on the norm of the difference between the two directions translates to a lower bound on the inner product of the two directions wherever $||\tilde{\nabla}^{\mu,e}_i|| > \epsilon$, specifically $\langle \tilde{\nabla}^{\mu,e}_i, \tilde{\nabla}^{\mu}_i \rangle > 0$ (see Figure~\ref{l2tocos}). And recall that the direction with exact parents is equivalent to the gradient of \oeg{} with exact parents, $\nabla^{\alpha,e}_i$.

\begin{figure}[h!]
    \centering
    \begin{subfigure}[t]{.25\columnwidth}
    \centering
    \includegraphics[scale=0.4]{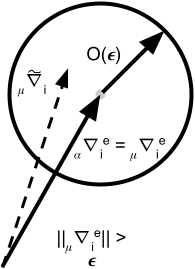}
    \caption{\label{l2tocos}}
    \end{subfigure}
    \hspace{0.2cm}
    \begin{subfigure}[t]{.72\columnwidth}
    \centering
    \includegraphics[scale=0.3]{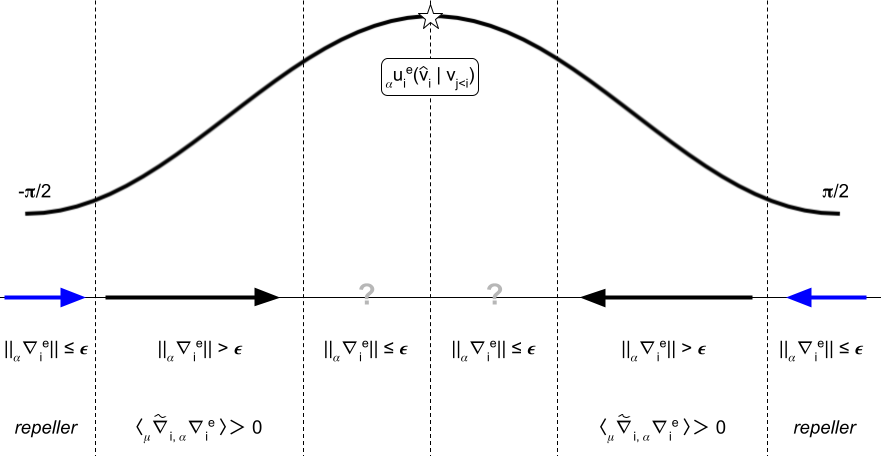}
    \caption{\label{costrap}}
    \end{subfigure}
    \caption{(\subref{l2tocos}) Close in Euclidean distance can imply close in angular distance if the vectors are long enough. (\subref{costrap}) The stable region for \seg{} consists of an $\mathcal{O}(\epsilon)$ ball around the true optimum as $\epsilon \rightarrow 0$.}
    \label{fig:errprop}
\end{figure}

Therefore, by a Lyapunov argument, the $\tilde{\nabla}^{\mu}_i$ direction is an ascent direction on the \oeg{} utility where it forms an acute angle (positive inner product) with $\nabla^{\alpha,e}_i$. Furthermore, $\nabla^{\alpha,e}_i$ is the gradient of a utility function that is sinusoidal along the sphere manifold; specifically, it is a cosine with period $\pi$ and positive amplitude dependent on the spectrum of $\cov{}$ (c.f.~equation (8) of~\citet{gemp2021eigengame}). We can derive an upper bound on the size of the angular region for which $\tilde{\nabla}^{\mu}_i$ is not necessarily an ascent direction (the ``?" marks in Figure~\ref{fig:errprop}). This region is defined as the set of angles for which the norm of the utility's derivative is small, i.e., $||\nabla^{\alpha,e}_i|| \le \epsilon$. The derivative of cosine is sine, which depends linearly on its argument (angle) for small values, therefore, $\vert \theta \vert \le \mathcal{O}(\epsilon)$ or $\vert \frac{\pi}{2} - \theta \vert \le \mathcal{O}(\epsilon)$. As long as $\hat{v}_i$ does not lie within the $\vert \frac{\pi}{2} - \mathcal{O}(\epsilon) \vert$ region, \seg{} will ascend the utility landscape to within $\mathcal{O}(\epsilon)$ angular error of the true eigenvector $v_i$. 
In the limit as $\epsilon \rightarrow 0$, the size of the $\vert \frac{\pi}{2} - \mathcal{O}(\epsilon) \vert$ region vanishes to a point, $v^{\perp}_i$. To understand the stability of this point, we can again appeal to the analysis from~\cite{gemp2021eigengame}\textemdash see equation (8) on page 7 of that work. The Jacobian of $\tilde{\nabla}^{\mu}_i$ and the Hessian of $u^{\alpha}_i$ are equal with exact parents, and we know that its Riemannian Hessian is positive definite if the $i$th eigengap is positive: $H^R_{\hat{v}_i}[u^{\alpha}_i] \succeq (\lambda_i - \lambda_{i+1}) I$. This means that the point $v^{\perp}_i$ is a repeller for \oeg{}. Similarly to before, we can show more formally that an $\mathcal{O}(\epsilon)$ perturbation to parents results in an $\mathcal{O}(\epsilon)$ perturbation to the Jacobian of $\tilde{\nabla}^{\mu}_i$ from $H[u^{\alpha}_i]$:
\begin{align}
    J &= [I - \sum_{j < i} \hat{v}_j \hat{v}_j^\top] \cov{}
    \\ &= [I - \sum_{j < i} (v_j + w_j) (v_j + w_j)^\top] \cov{}
    \\ &= [I - \sum_{j < i} v_j v_j^\top] \cov{} - \sum_{j < i} [w_j v_j^\top + v_j w_j^\top + w_j w_j^\top] \cov{}
    \\ &= [I - \sum_{j < i} v_j v_j^\top] \cov{} - \mathcal{O}(\epsilon) W
    \\ &= H[u^{\alpha}_i] - \mathcal{O}(\epsilon) W
\end{align}
where $W$ is some matrix with $\mathcal{O}(1)$ entries (w.r.t. $\epsilon$). For the sphere, the Riemannian Jacobian is a linear function of the Jacobian ($J^R_{\hat{v}_i} = (I - \hat{v}_i\hat{v}_i^\top) J - (\hat{v}_i^\top J \hat{v}_i) I = H^R_{\hat{v}_i}[u^{\alpha}_i] - \mathcal{O}(\epsilon)$) and therefore the error remains $\mathcal{O}(\epsilon)$. The set of (non)symmetric, positive semidefinite matrices ($A$ is p.s.d. iff $y^\top A y \ge 0 \,\, \forall \,\, y$) forms a closed convex cone, the interior of which contains positive definite matrices~\cite{wang2010cone}. Therefore, $J^R_{\hat{v}_i}$ remains in this set after a small enough $\mathcal{O}(\epsilon)$ perturbation. Therefore, in the limit $\epsilon \rightarrow 0$, the spectrum of the Jacobian will also be positive definite indicating the point $v^{\perp}_i$ is a repeller. This is indicated by the blue arrows in Figure~\ref{costrap}.

Figure~\ref{costrap} summarizes the results that the stable region for \oeg{}  consists of an $\mathcal{O}(\epsilon)$ ball around the true optimum for parents with $\mathcal{O}(\epsilon)$ angular error.
\end{proof}

\section{Noise is Martingale Difference Sequence}
\label{app:mds}

Let $\Delta^{\mu,t}_i  = \Big[ I - \sum_{j < i} \hat{v}^{(t)}_j \hat{v}_j^{(t)\top} \Big] \cov{} \hat{v}^{(t)}_i$ be the \seg{} update direction computed using the full expected covariance matrix. Let $\hat{\Delta}^{\mu,t}_i  = \Big[ I - \sum_{j < i} \hat{v}^{(t)}_j \hat{v}^{(t)\top}_j \Big] \cov{}_t \hat{v}^{(t)}_i$ be the update direction computed using a minibatch estimate of the covariance matrix where minibatches are unbiased because they are formed from data sampled uniformly at random from the dataset. Define $M^{V^{(t)}}_{i,t+1} = \hat{\Delta}^{\mu,t}_i - \Delta^{\mu,t}_i$ and let $M^{V^{(t)}}_{t+1} = [M^{V^{(t)}}_{1,t+1}, \ldots, M^{V^{(t)}}_{k,t+1}]^\top$ where $V^{(t)} = \{v^{(t)}_{i \in [k]}\}$.

\begin{lemma}
\label{lemma:mds}
$\{M^{V^{(t)}}_{t+1}\}$ is a bounded martingale difference sequence with respect to the increasing
$\sigma$-fields
\begin{align}
    \mathcal{F}_t &= \sigma(\{(\hat{v}_i)^{(0)}_{i \in [k]}, \ldots, (\hat{v}_i)^{(t)}_{i \in [k]}\}, \{\hat{\cov{}}^{(1)}, \ldots, \hat{\cov{}}^{(t)}\})
\end{align}
\end{lemma}

\begin{proof}
Given the filtration $\mathcal{F}_t$, we find
\begin{align}
    \mathbb{E}[M^{V^{(t)}}_{i,t+1} \vert \mathcal{F}_t] = \Big[ I - \sum_{j < i} \hat{v}^{(t)}_j \hat{v}_j^{(t)\top} \Big] \mathbb{E}[\cov{}_t - \cov{}] \hat{v}^{(t)}_i = 0
\end{align}
where the first equality holds because each $C_t$ is formed from a minibatch sampled i.i.d. from the dataset and therefore independent of the filtration. This result holds for all $i \in [k]$, therefore
\begin{align}
    \mathbb{E}[M^{V^{(t)}}_{t+1} \vert \mathcal{F}_t] = 0.
\end{align}

Furthermore,
\begin{align}
    &\sup_{t} \mathbb{E}[||M^{V^{(t)}}_{i,t+1}||^2 \vert \mathcal{F}_t]
    \\ &= \sup_{t} \mathbb{E}\Big[ \hat{v}_i^{(t)\top} (\cov{}_t - \cov{})^\top \overbrace{\Big[ I - \sum_{j < i} \hat{v}^{(t)}_j \hat{v}_j^{(t)\top} \Big]^\top}^{P^\top} \overbrace{\Big[ I - \sum_{j < i} \hat{v}^{(t)}_j \hat{v}_j^{(t)\top} \Big]}^{P} (\cov{}_t - \cov{}) \hat{v}^{(t)}_i \Big\vert \mathcal{F}_t\Big]
    \\ &= \sup_{t} \mathbb{E}[ \hat{v}_i^{(t)\top} (\cov{}_t - \cov{})^\top P^\top P (\cov{}_t - \cov{}) \hat{v}^{(t)}_i \big\vert \mathcal{F}_t]
    \\ &\le \sup_{t} \mathbb{E}[ \lambda_i^2 \hat{v}_i^{(t)\top} (\cov{}_t - \cov{})^\top I (\cov{}_t - \cov{}) \hat{v}^{(t)}_i \big\vert \mathcal{F}_t]
    \\ &\le \sup_{t} \lambda_i^2 \hat{v}_i^{(t)\top} \mathbb{E}[(\cov{}_t - \cov{})^\top (\cov{}_t - \cov{}) \big\vert \mathcal{F}_t] \hat{v}^{(t)}_i
    \\ &\le \max\{1, (i-2)^2\}^2 \xi^2
\end{align}
where $\lambda_i \le \max\{1, (i-2)^2\}$ is the max singular value of $P$\footnote{In the worst case, each subspace subtracted off by $\hat{v}_j \hat{v}_j^\top$ subtracts a $1$ from the eigenvalue of $1$ of the identity matrix.} and $\xi^2$ is the maximum eigenvalue of $\mathbb{E}[(\cov{}_t - \cov{})^\top (\cov{}_t - \cov{})]$ over all $t$. Summing over $i$ we find
\begin{align}
    \sup_{t} \mathbb{E}[||M^{V^{(t)}}_{i,t+1}||^2 \vert \mathcal{F}_t] &\le (\sum_i \lambda_i) \xi^2
    \\ &\le (2 + \sum_{a=1}^{k-2} a^2) \xi^2
    \\ &= (2 + \frac{1}{6}(k-2)(k-1)(2k-3)) \xi^2
    \\ &\le (2 + \frac{1}{3}k^3) \xi^2.
\end{align}
\end{proof}
This is a worst case bound. As the parent eigenvectors converge, the max singular value of $P$ converges to $1$ so that the variance of the magnitude of the martingale difference is upper bounded by $k\xi^2$.

\emph{Note}: For a finite dataset of $n$ samples or for a distribution with bounded moments like the Gaussian distribution, $\xi$ will be finite. However, for other distributions like the Cauchy distribution, $\xi$ may be unbounded, so care should be taken when running \seg{} in different stochastic settings.



\section{Jax pseudocode}
\label{app:code}

For the sake of reproducibility we have included pseudocode in Jax. We use the Optax\footnote{\texttt{https://github.com/deepmind/optax}} optimization library~\cite{optax2020github} and the Jaxline training framework\footnote{\texttt{https://github.com/deepmind/jaxline}}. Our graph algorithm is a straightforward modification of the provided pseudo-code. See section~\ref{app-graphs} for details.

\begin{lstlisting}[language=Python, caption=Gradient and utility functions.]
"""
Copyright 2020 The EigenGame Unloaded Authors. 


Licensed under the Apache License, Version 2.0 (the "License");
you may not use this file except in compliance with the License.
You may obtain a copy of the License at

https://www.apache.org/licenses/LICENSE-2.0

Unless required by applicable law or agreed to in writing, software
distributed under the License is distributed on an "AS IS" BASIS,
WITHOUT WARRANTIES OR CONDITIONS OF ANY KIND, either express or implied.
See the License for the specific language governing permissions and
limitations under the License.
"""

import jax
import optax
import jax.numpy as jnp

def eg_grads(vi: jnp.ndarray,
                weights: jnp.ndarray,
                eigs: jnp.ndarray,
                data: jnp.ndarray) -> jnp.ndarray:
    """
    Args:
     vi: shape (d,), eigenvector to be updated
     weights:  shape (k,), mask for penalty coefficients,
     eigs: shape (k, d), i.e., vectors on rows
     data: shape (N, d), minibatch X_t
    Returns:
     grads: shape (d,), gradient for vi
    """
  weights_ij = (jnp.sign(weights + 0.5) - 1.) / 2.  # maps -1 to -1 else to 0
  data_vi = jnp.dot(data, vi)
  data_eigs = jnp.transpose(jnp.dot(data,
                            jnp.transpose(eigs)))  # Xvj on row j
  vi_m_vj = jnp.dot(data_eigs, data_vi)
  penalty_grads = vi_m_vj * jnp.transpose(eigs)
  penalty_grads = jnp.dot(penalty_grads, weights_ij)
  grads = jnp.dot(jnp.transpose(data), data_vi) + penalty_grads
  return grads


def utility(vi, weights, eigs, data):
    """Compute Eigengame utilities.
    util: shape (1,), utility for vi
    """
  data_vi = jnp.dot(data, vi)
  data_eigs = jnp.transpose(jnp.dot(data, jnp.transpose(eigs)))  # Xvj on row j
  vi_m_vj2 = jnp.dot(data_eigs, data_vi)**2.
  vj_m_vj = jnp.sum(data_eigs * data_eigs, axis=1)
  r_ij = vi_m_vj2 / vj_m_vj
  util = jnp.dot(jnp.array(r_ij), weights)
  return util
\end{lstlisting}

\begin{lstlisting}[language=Python, caption=EigenGame Update functions.]         
def _grads_and_update(vi, weights, eigs, input, opt_state, axis_index_groups):
    """Compute utilities and update directions, psum and apply.
    Args:
     vi: shape (d,), eigenvector to be updated
     weights:  shape (k_per_device, k,), mask for penalty coefficients,
     eigs: shape (k, d), i.e., vectors on rows
     input: shape (N, d), minibatch X_t
     opt_state: optax state
     axis_index_groups: For multi-host parallelism https://jax.readthedocs.io/en/latest/_modules/jax/_src/lax/parallel.html 
    Returns:
     vi_new: shape (d,), eigenvector to be updated
     opt_state: new optax state
     utilities: shape (1,), utilities
    """
    grads, utilities = _grads_and_utils(vi, weights, V, input)
    avg_grads = jax.lax.psum(
        grads, axis_name='i', axis_index_groups=axis_index_groups)
    vi_new, opt_state, lr = _update_with_grads(vi, avg_grads, opt_state)
    return vi_new, opt_state, utilities

def _grads_and_utils(vi, weights, V, inputs):
    """Compute utiltiies and update directions ("grads"). 
        Wrap in jax.vmap for k_per_device dimension."""
    utilities = utility(vi, weights, V, inputs)
    grads = eg_grads(vi, weights, V, inputs)
    return grads, utilities
    
def _update_with_grads(vi, grads, opt_state):
    """Compute and apply updates with optax optimizer.
        Wrap in jax.vmap for k_per_device dimension."""
    updates, opt_state = self._optimizer.update(-grads, opt_state)
    vi_new = optax.apply_updates(vi, updates)
    vi_new /= jnp.linalg.norm(vi_new)
    return vi_new, opt_state
\end{lstlisting}

\begin{lstlisting}[language=Python, caption=Skeleton for Jaxline experiment.]     
def init(self, *):
    """Initialization function for a Jaxline experiment."""
    weights = np.eye(self._total_k) * 2 - np.ones((self._total_k, self._total_k))
    weights[np.triu_indices(self._total_k, 1)] = 0.
    self._weights = jnp.reshape(weights, [self._num_devices,
                                          self._k_per_device, 
                                          self._total_k])
                                          
    local_rng = jax.random.fold_in(jax.random.PRNGkey(seed), jax.host_id())
    keys = jax.random.split(local_rng, self._num_devices)
    V = jax.pmap(lambda key: jax.random.normal(key, (self._k_per_device, self._dims)))(keys)
    self._V = jax.pmap(lambda V: V / jnp.linalg.norm(V, axis=1, keepdims=True))(V)    
        
    # Define parallel update function. If k_per_device is not None, wrap individual functions with vmap here.
    self._partial_grad_update = functools.partial(
        self._grads_and_update, axis_groups=self._axis_index_groups)
    self._par_grad_update = jax.pmap(
        self._partial_grad_update, in_axes=(0, 0, None, 0, 0, 0), axis_name='i')
        
    self._optimizer = optax.sgd(learning_rate=1e-4, momentum=0.9, nesterov=True)

def step(self, *):
    """Step function for a Jaxline experiment"""
    inputs = next(input_data_iterator)
    self._local_V = jnp.reshape(self._V, (self._total_k, self._dims))  
    self._V, self._opt_state, utilities, lr = self._par_grad_update(
        self._V, self._weights_jnp, self._local_V, inputs, self._opt_state,
        global_step)
\end{lstlisting}
\section{\seg{} on Graphs}
\label{app-graphs}

\Algref{unbiased_eg_graph} receives a stream of edges represented as a matrix with edges on the rows and outgoing node id ($out$) and incoming node id ($in$) as nonegative integers on the columns. The method \texttt{zeros\_like}$(z)$ returns an array of zeros with the same dimensions as $z$. The method \texttt{index\_add}$(z, idx, val)$ adds the values in array $val$ to $z$ at the corresponding indices in array $idx$ with threadsafe locking so that indices in $idx$ may be duplicated. Both methods are available in \textsc{Jax}. The largest eigenvector $\hat{v}_1$ is learned to estimate $\lambda_1$ and may be discarded. The bottom-$k$ eigenvectors are returned by the algorithm in increasing order. \Algref{unbiased_eg_graph} expects $k + 1$ random unit vectors as input rather than $k$ in order to additionally estimate the top eigenvector necessary for the computation; otherwise, the inputs are the same as~\Algref{unbiased_eg}.

\begin{algorithm}[ht!]
\begin{algorithmic}[1]
    \STATE Given: Edge stream $\mathcal{E}_t \in \mathbb{R}^{n' \times 2}$, number of parallel machines $M$ per player (minibatch size per partition $n''=\frac{n'}{M}$), initial vectors $\hat{v}_i^0 \in \mathcal{S}^{d-1}$, step size sequence $\eta_t$, and iterations $T$.
    \STATE $\hat{v}_i \leftarrow \hat{v}_i^0$ for all $i \in \{1, \ldots, k + 1\}$
    \STATE $\lambda_1 \leftarrow 2 \vert \mathcal{V} \vert$  *upper bound on top eigenvalue*
    \FOR{$t = 1: T$}
        \PARFOR{$i = 1: k + 1$}
            \PARFOR{$m = 1: M$}
            \STATE $[Xv]_i = \hat{v}_i(out_{tm}) - \hat{v}_i(in_{tm})$
            \STATE $[X^\top Xv]_i \leftarrow \texttt{zeros\_like}(\hat{v}_i)$
            \STATE $[X^\top Xv]_i \leftarrow \texttt{index\_add}([X^\top Xv], out_{tm}, [Xv]_i)$
            \STATE $[X^\top Xv]_i \leftarrow \texttt{index\_add}([X^\top Xv], in_{tm}, -[Xv]_i)$
            \IF{$i = 1$}
                \STATE $\lambda_1 \leftarrow ||[Xv]_i||^2$
                \STATE $\tilde{\nabla}^{\mu}_{it'} \leftarrow [X^\top Xv]_i$
            \ELSE
                \STATE $\tilde{\nabla}^{\mu}_{im} \leftarrow \lambda_1[\hat{v}_i - \sum_{1<j<i} (\hat{v}_i^\top \hat{v}_j) \hat{v}_j]$
                \STATE $[Xv]_j = \hat{v}_j(out_{tm}) - \hat{v}_j(in_{tm})$ for all $j$
                \STATE $\tilde{\nabla}^{\mu}_{it'} \mathrel{-}= [X^\top X v]_i - \sum_{1 < j < i} ([Xv]_i^\top [Xv]_j) \hat{v}_j$
            \ENDIF
            \ENDPARFOR
        \STATE $\tilde{\nabla}^{\mu}_{i} \leftarrow \frac{1}{n'} \sum_{t'} [ \tilde{\nabla}^{\mu}_{it'} ]$
        \STATE $\hat{v}_i' \leftarrow \hat{v}_i + \eta_t \tilde{\nabla}^{\mu}_{i}$
        \STATE $\hat{v}_i \leftarrow \frac{\hat{v}_i'}{|| \hat{v}_i' ||}$
        \ENDPARFOR
    \ENDFOR
    \STATE return $\{\hat{v}_i | i \in \{2, \ldots, k + 1\}\}$
\end{algorithmic}
\caption{\seg{} for Graphs (w/o Riemannian gradient projection)}
\label{unbiased_eg_graph}
\end{algorithm}
\section{Algorithm Design Process}
\label{alg_design}

In section~\ref{conv}, we presented $u^{\mu}_i$ as the Rayleigh quotient of a deflated matrix (repeated in~\eqref{pseudo_obj} for convencience):
\begin{align}
    u^{\mu}_i &= \hat{v}_i^\top \Big[ \overbrace{I - \sum_{j < i} \hat{v}_j \hat{v}_j^\top}^{\text{deflation}} \Big] \cov{} \,\sg[\hat{v}_i] \label{app:pseudo_obj}
    \\ &= \hat{v}_i^\top \cov{} \,\sg[\hat{v}_i] - \sum_{j < i} (\hat{v}_i^\top \cov{} \hat{v}_j) (\sg[\hat{v}_i]^\top \hat{v}_j) \label{strange_obj}
    \\ u^{\alpha}_i &= \overbrace{\hat{v}_i^\top \cov{} \hat{v}_i}^{\text{Var}} - \sum_{j < i} \overbrace{\frac{\langle \hat{v}_i, \cov{} \hat{v}_j \rangle^2}{\langle \hat{v}_j, \cov{} \hat{v}_j \rangle}}^{\perp\text{-penalty}}.
\end{align}
Alternatively, we can consider $u^{\mu}_i$ as~\eqref{strange_obj} in light of the derivation for $u^{\alpha}_i$ by~\citet{gemp2021eigengame}. In that case, utilities are constructed from entries in the matrix

\begin{align}
    \hat{V}^\top \cov{} \hat{V} &= \begin{bmatrix}
    \langle \hat{v}_1, \cov{} \hat{v}_1 \rangle & \langle \hat{v}_1, \cov{} \hat{v}_2 \rangle & \ldots & \langle \hat{v}_1, \cov{} \hat{v}_d \rangle
    \\ \langle \hat{v}_2, \cov{} \hat{v}_1 \rangle & \langle \hat{v}_2, \cov{} \hat{v}_2 \rangle & \ldots & \langle \hat{v}_2, \cov{} \hat{v}_d \rangle
    \\ \vdots & \vdots & \ddots & \vdots
    \\ \langle \hat{v}_d, \cov{} \hat{v}_1 \rangle & \langle \hat{v}_d, \cov{} \hat{v}_2 \rangle & \ldots & \langle \hat{v}_d, \cov{} \hat{v}_d \rangle
    \end{bmatrix}.
\end{align}

It is argued that if $\hat{V}$ diagonalizes $M$ and captures maximum variance, then the diagonal $\langle \hat{v}_i, \cov{} \hat{v}_i \rangle$ terms must be maximized and the off-diagonal $\langle \hat{v}_i, \cov{} \hat{v}_j \rangle$ terms must be zero. As the latter mixed terms may be negative, the authors square the mixed terms to form ``minimizable utilities'' and divide them by $\langle \hat{v}_j, \cov{} \hat{v}_j \rangle$ so that they have similar ``units" to the terms $\langle \hat{v}_i, \cov{} \hat{v}_i \rangle$ of the first type. In contrast, the $u^{\mu}_i$ utilities \emph{could} be arrived at by instead multiplying the mixed terms by $\langle \hat{v}_i, \hat{v}_j \rangle$. While this ensures the mixed terms are positive with exact parents (because $\langle \hat{v}_i, \cov{} v_j \rangle = \lambda_j \langle \hat{v}_i, v_j \rangle$), it does not ensure they are always positive in general\footnote{e.g., let $\cov{} = \begin{bmatrix} 2 & 1 \\ 1 & 1 \end{bmatrix}$ and place $\hat{v}_1$ at $-30^{\circ}$ and $\hat{v}_2$ at $90^{\circ}$.}. In other words, $u^{\mu}_i$ is defined in way such that the $\perp$-penalties actually encourage vectors to align at times when they should in fact do the opposite! We therefore consider it unlikely that anyone would pose~\eqref{strange_obj} as a utility if coming from the perspective of \oeg{}.

We could have extended the diagram in Figure~\ref{fig:alg_design} to include this dead end link. We have also included the true gradient of $u^{\mu}_i$ as a logical endpoint. We present these extensions in Figure~\ref{fig:alg_design_grad}.
\begin{figure}[ht!]
    \centering
    \begin{tikzpicture}
      \matrix (m) [matrix of math nodes,row sep=3em,column sep=1.2em,minimum width=2em]
      {
         {}
         &
         \qquad
         u^{\mu}_i 
         \qquad
         & 
         \qquad
         Eq.~(\ref{strange_obj})
         \qquad
         & 
         \qquad
         u^{\alpha}_i 
         \qquad
         \\
         \nabla^{\mu}_i & \tilde{\nabla}^{\mu}_i & \qedsymbol & \nabla^{\alpha}_i \\};
      \path[-stealth, <->]
        (m-1-2) edge node {} (m-2-1)
        (m-1-4) edge node [left] {Var \& $\perp$} (m-2-4);
      \path[-stealth, ->]
        (m-2-2) edge node [right] {$\sg$} (m-1-2)
        (m-2-4) edge node [below] {bias} (m-2-3)
        (m-2-3) edge node [below] {remove} (m-2-2);
      \path[-|]
        (m-1-3) edge node {$\times$} (m-2-3);    
    \end{tikzpicture}
    \caption{This diagram presents the relationships between utilities and updates. An arrow indicates the endpoint is reasonably derived from the origin; the lack of an arrow indicates the direction is unlikely. The link from~\eqref{strange_obj} is explicitly crossed out with a hard stop for emphasis.}
    \label{fig:alg_design_grad}
\end{figure}
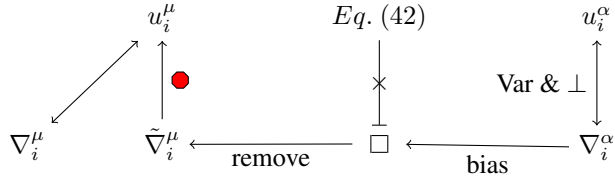

\subsection{Gradient Ascent on $u^{\mu}_i$}
\label{eggha_grad}

If we remove the stop gradient $\sg{}$ from~\eqref{app:pseudo_obj}, we are left with~\eqref{true_obj}:
\begin{align}
    u^{\mu}_i &= \hat{v}_i^\top [\overbrace{I - \sum_{j < i} \hat{v}_j \hat{v}_j^\top}^{\text{deflation}}] \cov{} \hat{v}_i \label{true_obj}.
\end{align}

If we then differentiate this utility, we find its gradient is
\begin{align}
    \nabla^{\mu}_i &= \cov{} \hat{v}_i - \frac{1}{2} \sum_{j < i} [ (\hat{v}_i^\top \cov{} \hat{v}_j) \hat{v}_j + (\hat{v}_i^\top \hat{v}_j) \cov{} \hat{v}_j ] \label{eq:nabla_eg}.
\end{align}

We also reran experiments with this update direction, $\nabla^{\mu}_i$ on the synthetic and MNIST domains. The update is unbiased, so it would be expected to scale well, however, it (in \textcolor{orange}{orange}) appears to scale more poorly than \seg{} with smaller minibatches.

\begin{figure}[h!]
    \centering
    \begin{subfigure}[t]{.45\textwidth}
    \includegraphics[scale=0.41]{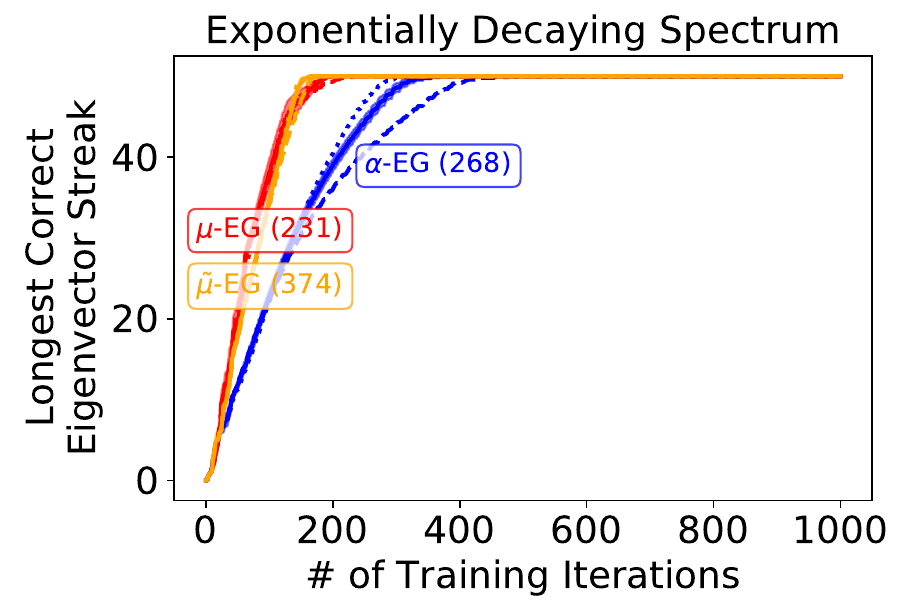}
    \end{subfigure}
    \begin{subfigure}[t]{.45\textwidth}
    \includegraphics[scale=0.41]{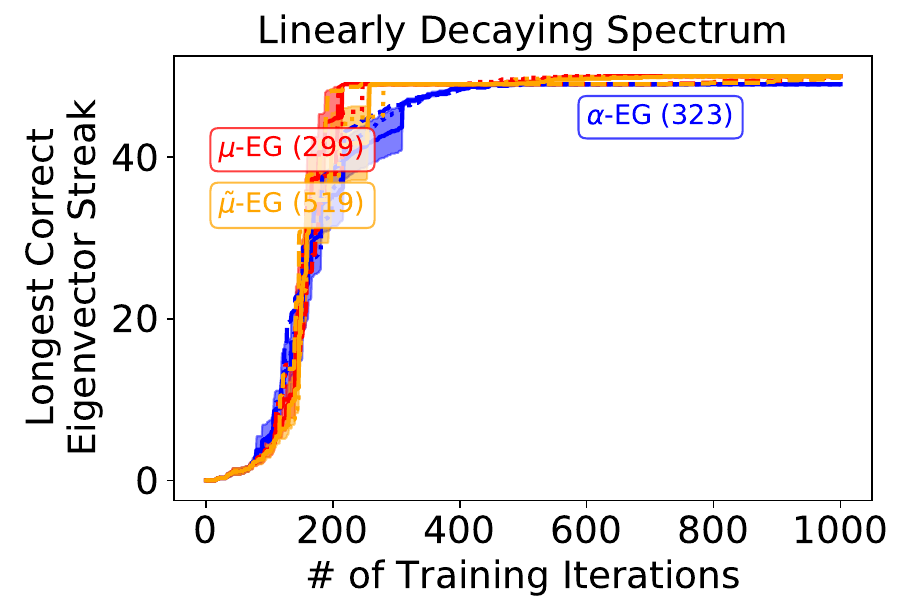}
    \end{subfigure}
    \caption{Synthetic Experiment. Runtime (milliseconds) in legend.}
    \label{fig:exp:synthetic_grad}
\end{figure}

\begin{figure}[h!]
    \begin{subfigure}[t]{.31\textwidth}
        \includegraphics[scale=0.33]{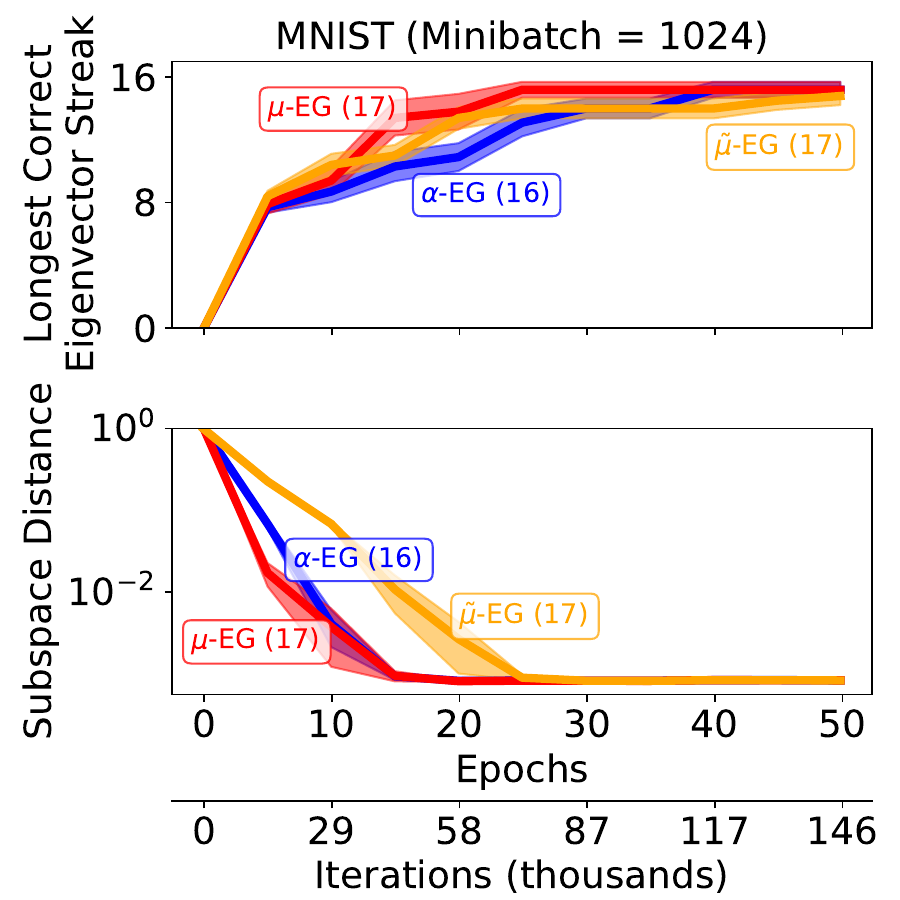}
    \end{subfigure}
    \begin{subfigure}[t]{.31\textwidth}
        \includegraphics[scale=0.33]{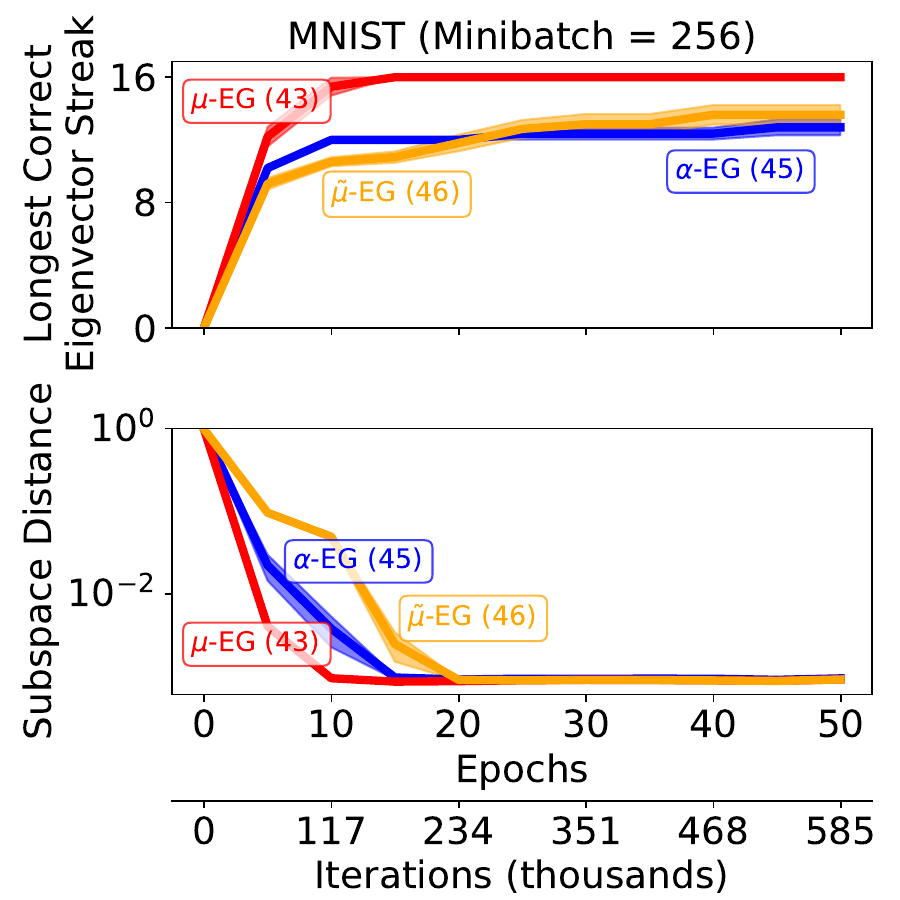}
    \end{subfigure}
    \begin{subfigure}[t]{.31\textwidth}
        \includegraphics[scale=0.33]{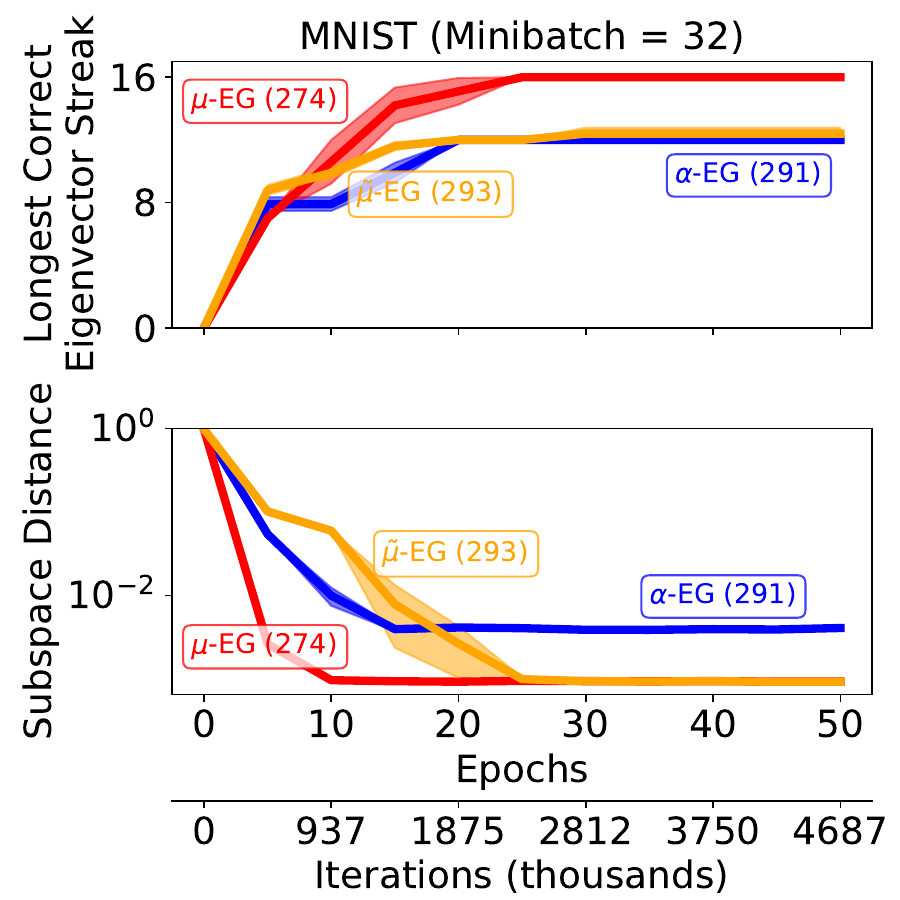}
    \end{subfigure}
    \caption{MNIST Experiment. Runtime (seconds) in legend. Each column evaluates a different minibatch size $\in \{1024, 256, 32\}$.}
    \label{fig:exp:mnist_grad}
\end{figure}

In Figure~\ref{fig:exp:mnist_grad}, $\tilde{\mu}$-EG appears to converge in terms of subspace error but slows in terms of longest eigenvector streak. $\tilde{\mu}$-EG updates are also unbiased so we would expect it is convergent globally, but it underperforms relative to $\mu$-EG. In contrast, $\alpha$-EG stalls in terms of subspace error likely due to bias.

Note that with exact parents, mu-EG and mu-tilde-EG have the same update (plug $Cvj = \lambda_j vj$ into \eqref{eq:nabla_eg}), so the difference must come from when the parents are still inaccurate.

In Figure~\ref{fig:exp:norm_of_diff}, we have plotted the norm of the difference between subsequent values of the eigenvectors over training, i.e., how “far” $v_i$ moves after every update. Note all algorithms were run with the same fixed step size of $10^{-3}$, which was optimal for each algorithm in this setting. Clearly, the gradient version of \oeg{} ($\tilde{\mu}$-EG) shown in Figure~\ref{fig:exp:norm_of_diff:mugrad_nod} exhibits higher norms overall.

We believe this is due to the higher variance penalty terms (all methods maximize the same Rayleigh quotient term). Note that both $\alpha$-EG and $\mu$-EG construct their pentalty directions by a weighted sum of terms. These terms are computed differently, but both compute weights with inner products between $v_i$ and $v_j$ after projecting onto the samples in the minibatch $X_t$. For example, $\mu$-EG computes $\langle X_t v_i, X_t v_j \rangle = v_i^\top C_t v_j$. Without loss of generality, assume $C$ is a diagonal matrix (with the eigenvalues on its diagonal). Then $\langle v_i^\top C v_j \rangle = \sum_k \lambda_k v_{ik} v_{jk}$. The eigenvectors $v_i = e_i$ in this case, and so the inner product measures alignment between $v_i$ and $v_j$ in the dimensions that $v_i$ and $v_j$ are trained to be orthogonal. Due to noise in the minibatches $X_t$, $v_i$ and $v_j$ may ``drift'' in the remaining dimensions. Projecting essentially ignores these though because they are weighted by small eigenvalues.

In contrast, $\tilde{\mu}$-EG computes weights as raw inner products between $v_i$ and $v_j$. Therefore, any drift of $v_i$ and $v_j$ due to noise in the minibatch samples contributes to the inner product: $\langle v_i, v_j \rangle = \sum_k v_{ik} v_{jk}$. We suspect this is the reason $\tilde{\mu}$-EG exhibits higher drift distance.

In summary, $\alpha$-EG updates are computed as a weighted sum of terms where the weights are computed using inner products between $v_i$ and its parents after projecting them to a lower dimensional space. Computing the inner product in this particular space results in lower variance for each inner product. Unfortunately, $\alpha$-EG updates are biased, so while they exhibit relatively low ``norm of drift'', they converge to the incorrect solution (parents never converge to precise solution which prohibits children from learning accurately solutions).

$\tilde{\mu}$-EG updates are unbiased, so they should converge to the correct solution in the limit, but they exhibit higher variance due to their penalty weights being computed in the original high dimensional space (i.e., they pick up every little bit of noise).

Finally, $\mu$-EG updates are unbiased and compute their penalty weights in a lower dimensional space, suppressing the bulk of the noise that appears from drift caused by randomness in the minibatches. They exhibit the lowest levels of “norm of drift”.

\begin{figure}[h!]
    \begin{subfigure}[t]{.31\textwidth}
        \includegraphics[scale=0.30]{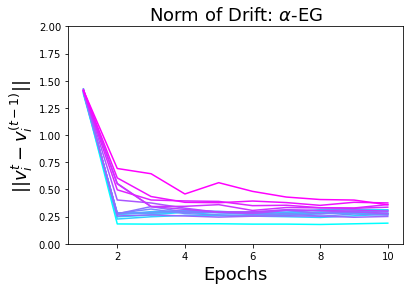}
        \caption{\label{fig:exp:norm_of_diff:alpha_nod}}
    \end{subfigure}
    \begin{subfigure}[t]{.31\textwidth}
        \includegraphics[scale=0.30]{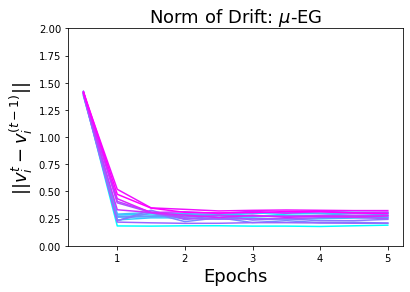}
        \caption{\label{fig:exp:norm_of_diff:mu_nod}}
    \end{subfigure}
    \begin{subfigure}[t]{.31\textwidth}
        \includegraphics[scale=0.30]{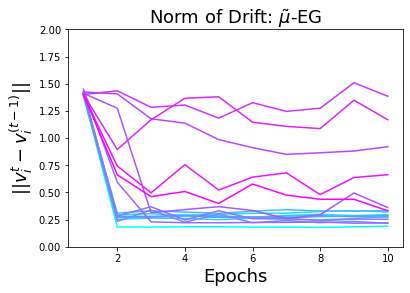}
        \caption{\label{fig:exp:norm_of_diff:mugrad_nod}}
    \end{subfigure}
    \caption{MNIST Experiment. Subfigures (\subref{fig:exp:norm_of_diff:alpha_nod}-\subref{fig:exp:norm_of_diff:mugrad_nod}) correspond to \seg{}, \oeg{}, and the gradient version of \oeg{} discussed above respectively. Each experiment is conducted with a minibatch size of $32$ over five epochs of training and averaged over 10 trials. Each curve shows the update distance after each iteration for one of the top-$16$ eigenvectors. Cyan curves indicate eigenvectors higher in the hierarchy (e.g., $v_1$) and magenta curves indicate eigenvectors lower in the hierarchy.}
    \label{fig:exp:norm_of_diff}
\end{figure}

\subsection{Acceleration}
\label{acc}

We conjecture that \seg{} converges more quickly than \oeg{} because of the following two claims.

\begin{itemize}
    \item[\textbf{Claim 1}] 
 The penalty terms of $\tilde{\nabla}^{\mu}_i$ are all within $90^\circ$ of those of $\nabla^{\alpha}_i$ because
$
    \Big\langle \frac{\cov{} \hat{v}_j}{\hat{v}_j^\top \cov{} \hat{v}_j}, \hat{v}_j \Big\rangle = 1 > 0. 
$
\item[\textbf{Claim 2}]  The penalty terms of $\tilde{\nabla}^{\mu}_i$ are all smaller in magnitude than those of $\nabla^{\alpha}_i$:
$
    ||\hat{v}_j|| \le \Big|\Big| \frac{\cov{} \hat{v}_j}{\hat{v}_j^\top \cov{} \hat{v}_j} \Big|\Big|.
$
\end{itemize}

Indeed, consider the direction $\cov{} \hat{v}_j$. By properties of the vector rejection, we know the rejection of this direction onto the tangent space of the unit sphere has magnitude less than or equal to that of the original vector, $||\cov{} \hat{v}_j||$. The projection is $(I - \hat{v}_j \hat{v}_j^\top) (\cov{} \hat{v}_j)$. Therefore, the rejection is $\hat{v}_j \hat{v}_j^\top (\cov{} \hat{v}_j)$ and, by the preceding argument, we know its magnitude $\vert \hat{v}_j^\top \cov{} \hat{v}_j \vert ||\hat{v}_j||$ is less than or equal to $||\cov{} \hat{v}_j||$. Rearranging the inequality completes the proof. \qed

By \textbf{Claim 1}, the penalty directions of \segshort{} and \oegshort{} approximately agree. And by \textbf{Claim 2}, \oegshort{}'s penalty direction is shorter. Consider a scenario where a parent of $\hat{v}_i$ has not converged and transiently occupies space along $\hat{v}_i$'s geodesic to its true endpoint $\hat{v}_i$, a strong penalty term will force $\hat{v}_i$ to take a roundabout trajectory, thereby slowing its convergence. A weaker penalty term allows $\hat{v}_i$ to pass through regions occupied by its parent \textbf{as long as} its parent is not an eigenvector. Recall from Section~\ref{ueg} that the two utilities are equivalent when the parents are eigenvectors.

\end{document}